\documentclass[manuscript, screen]{jair}

\setcopyright{cc}
\acmDOI{10.1613/jair.1.xxxxx}
\JAIRAE{}
\JAIRTrack{}
\acmVolume{4}
\acmArticle{6}
\acmMonth{8}
\acmYear{2027}

\RequirePackage[
  datamodel=acmdatamodel,
  style=acmauthoryear,
  backend=biber,
  giveninits=true,
  uniquename=init
  ]{biblatex}

\usepackage{booktabs}
\usepackage[LGR,T1]{fontenc}  
\newcommand{\greek}[1]{{\fontencoding{LGR}\selectfont #1}}
\usepackage{pifont}

\usepackage{wasysym}
\usepackage{tikz}
\usetikzlibrary{positioning,arrows.meta,fit}
\usepackage{algorithm}
\usepackage{algpseudocode}
\usepackage{longtable}

\newcommand{\sys}{Kairos}
\newcommand{\rows}[1]{\csname @@input\endcsname generated/#1 }
\newcommand{\yes}{\textcolor{green!55!black}{\ding{51}}}
\newcommand{\half}{\textcolor{orange}{$\LEFTcircle$}}
\newcommand{\no}{\textcolor{red!75!black}{\ding{55}}}

\newtheorem{definition}{Definition}
\newtheorem{proposition}{Proposition}
\newtheorem{example}{Example}

\begin{document}

\title[Kairos]{Kairos: Semantics-Aware Orchestration of Planning Engines across Classical, Numeric, Temporal and Hybrid PDDL}

\author{Francisco Mart\'in Rico}
\authornote{Corresponding Author.}
\email{francisco.rico@urjc.es}
\affiliation{%
  \institution{Intelligent Robotics Lab, Universidad Rey Juan Carlos}
  \city{Fuenlabrada}
  \state{Madrid}
  \country{Spain}
}
\author{Esther Aguado González}
\email{esther.aguado@urjc.es}
\affiliation{%
  \institution{Intelligent Robotics Lab, Universidad Rey Juan Carlos}
  \city{Fuenlabrada}
  \state{Madrid}
  \country{Spain}
}
\author{Francisco Miguel Moreno Olivo}
\email{franciscom.moreno@urjc.es}
\affiliation{%
  \institution{Intelligent Robotics Lab, Universidad Rey Juan Carlos}
  \city{Fuenlabrada}
  \state{Madrid}
  \country{Spain}
}
\author{Rodrigo P\'erez Rodr\'iguez}
\email{rodrigo.perez@urjc.es}
\affiliation{%
  \institution{Intelligent Robotics Lab, Universidad Rey Juan Carlos}
  \city{Fuenlabrada}
  \state{Madrid}
  \country{Spain}
}
\author{Jos\'e Miguel Guerrero Hern\'andez}
\email{josemiguel.guerrero@urjc.es}
\affiliation{%
  \institution{Intelligent Robotics Lab, Universidad Rey Juan Carlos}
  \city{Fuenlabrada}
  \state{Madrid}
  \country{Spain}
}
\author{Francisco Javier Rodr\'iguez Lera}
\email{fjrodl@unileon.es}
\affiliation{%
  \institution{Robotics Group, Universidad de Le\'on}
  \city{Le\'on}
  \country{Spain}
}
\renewcommand{\shortauthors}{Mart\'in Rico et al.}

\begin{abstract}
{\bf Background:} Planning at run time, as in robots that re-plan while they act, needs a
planner that reads the PDDL its users write (durative actions, numeric resources, time windows,
continuous change), answers quickly, and delivers only plans that a model-based validator
accepts. Specialised planners are strong on individual fragments of PDDL, but they differ in
coverage, latency and in what their failures mean, so a system that plans repeatedly must pick
one and inherit its gaps.

{\bf Objectives:} We study whether selecting heterogeneous engines from the semantics of each
task can give broad operational coverage of PDDL, from STRIPS to PDDL+, while remaining
competitive with specialised planners and making explicit what each answer guarantees.

{\bf Methods:} \sys\ profiles each task (numeric fragment, temporal structure, size) to decide
which of five engines apply (ground heuristic search, compressed and snap-action temporal
search, time steps for hybrid dynamics, lifted search) and schedules them under one budget of
time and memory. Every candidate plan is checked against the parsed PDDL by a validator
independent of the planner's transformations. \emph{Unsolvable} is reported only when a search
whose exhaustion is a proof exhausts its space; otherwise the answer is \emph{no plan found},
with an explanation.

{\bf Results:} On 1\,756 tasks from nine comparative suites, \sys\ solves more
tasks at the full limits (30\,s, and 60\,s for PlanSys2 at scale) than the strongest evaluated
planner of each suite, with the largest differences
on numeric, temporal, PDDL3 and PDDL+ planning; it also solves the most tasks at 100\,ms and
1\,s under the reported protocol. Resampling over domains, the gain is above zero in six of the
nine suites, while on classical planning against LAMA it is within the noise at 30\,s. A further
450 feature-oriented tasks and 26 one-feature probes show that \sys\ handles all seventeen
features probed, where no other planner evaluated handles more than eight. Ablations identify
compressed temporal search and the novelty fallback as major contributors, and a comparison with
VAL traces every disagreement to a PDDL+ convention or to a documented VAL defect.

{\bf Conclusions:} Selecting engines from the semantics of the task, under a shared budget and
with validation of every plan, can combine broad PDDL coverage with competitive latency on the
suites evaluated, and separating validated plans, proofs of unsolvability and inconclusive
searches makes the meaning of each answer explicit. Planner and benchmark are open source.
\end{abstract}

\maketitle

\section{Introduction}
\label{sec:intro}

Automated planning gives an autonomous system the ability to decide what to do next from a model
of its actions and a goal. Its most common input language, PDDL \parencite{pddl}, which
inherited its core from STRIPS \parencite{strips}, has grown over almost three decades into a
family that describes durative actions, numeric resources, time windows, preferences and
continuous change \parencite{pddl21,pddl22,pddl3,pddl31,pddlplus}, and the planners that read it have
become fast and reliable on each of these fragments. In robotics, planning is increasingly done
on line: frameworks such as PlanSys2 \parencite{plansys2}, ROSPlan \parencite{rosplan} or
Unified Planning \parencite{up} write a PDDL problem from the robot's current knowledge, call a
planner, execute and monitor the plan, and call the planner again when something fails.

Planning on line puts three demands on the planner that offline evaluation tends to separate.
\begin{itemize}
\item \emph{Coverage of the language.} A robot domain mixes durative actions, batteries and other
numeric resources, time windows and sometimes continuous change. Rewriting it to fit a planner
is error-prone, and has to be redone each time the model changes.
\item \emph{Latency.} The planner runs inside the robot's control loop, on its own computer,
often to re-plan after a failure. An answer in tens of milliseconds is worth more than a better
plan in a minute.
\item \emph{Trust.} An invalid plan sends a robot to execute an impossible action, and a planner
that gives up without a reason leaves the designer guessing.
\end{itemize}
Today these demands are met one fragment at a time: each planner covers part of PDDL
(Section~\ref{sec:related}), so systems pick one planner and inherit its gaps.

Planning with large language models (LLMs) addresses a different question: LLMs
are not reliable plan generators on their own, and the approaches that produce correct plans
use them to write or refine a PDDL model for a symbolic planner, or pair them with a sound
verifier \parencite{valmeekam2023planning,kambhampati2024llmmodulo}. This article studies the
planner and its guarantees on given models; how the models are obtained is out of its scope.
\sys\ (\greek{kair'os})\footnote{In Greek, \emph{kair\'os} is the opportune moment, as opposed to
\emph{chronos}, measured time: a planner should answer when the answer is still useful.} is the
system that implements and tests the approach.

This article asks whether a single system can cover the
PDDL family, from STRIPS to PDDL+, by choosing among heterogeneous engines for each task, while
staying as fast as the specialised planners on their own fragments and stating precisely what
each of its answers guarantees. The approach rests on three principles, which organise the
article and its discussion:
\begin{enumerate}
\item \emph{Semantic applicability precedes algorithm selection.} Portfolios usually choose
among configurations by expected performance. Here, the semantics of the task (its numeric
fragment, its temporal structure, its size) first decides which transformations and engines
are applicable at all, and which of them are worth trying first even though they may be
incomplete; only then are time and memory shared among them.
\item \emph{Soundness, completeness and performance are separate properties of each engine.}
Temporal compression can be sound without being complete; a search can end without proving
anything; a plan can be valid under a numerical model without being certified. The meaning of
each answer is derived from the properties of the engine that produced it, not assumed.
\item \emph{Heterogeneity need not cost responsiveness.} Covering many fragments could impose a
large overhead against specialised planners; we test, on the suites evaluated, whether it does.
\end{enumerate}

\sys\ is a \emph{feature-aware, budgeted portfolio} that realises these principles, and the
contributions of the article are:
\begin{enumerate}
\item \emph{A task profile that decides which engines apply}, including a check of temporal
simplicity \emph{by value} (Section~\ref{sec:profile}). It lets compressed temporal search run on
most robotic domains, which a check by variable sends to the slower snap-action search; we prove
that compression is sound and show, with a counterexample, that it is not complete, so its
exhaustion is never taken as a proof.
\item \emph{An orchestration of five engines under one budget} (Sections~\ref{sec:engines}
and~\ref{sec:orchestration}): which engine runs, with which share of time and memory, and
when grounding hands over to lifted search, all decided from the task and from the measured
memory, never from domain names.
\item \emph{Answers with stated guarantees} (Section~\ref{sec:guarantees}): every plan passes a
validator that is independent of the planner's transformations; \emph{unsolvable} is reported
only after the exhaustion of a search for which exhaustion is a proof, and otherwise the
planner answers \emph{no plan found} and explains why.
\item \emph{An open benchmark} (Section~\ref{sec:benchmark}) of public tasks pinned by commit
and checksum, new families for features that no public collection covers, each with unsolvable
instances so that wrong answers can be counted, and one-feature probes that measure what each
planner supports.
\end{enumerate}
Most search techniques inside \sys\ come from earlier planners, which we cite where they are
used; Appendix~\ref{app:heuristics} states what \sys\ computes and where it differs. The
novelty lies in how they are selected and combined under a shared budget, in the temporal
profile that makes the fast engine applicable, and in the contract of the answers. The
experiments (Section~\ref{sec:experiments}) measure each of these: coverage and latency against
the best planner of each fragment, the support of each planner, an ablation of the engines and
components, the agreement of the validator with VAL \parencite{val}, and the answers on tasks
without a plan.

The rest of the article is organised as follows. Section~\ref{sec:background} gives the preliminaries and
Section~\ref{sec:related} the related work. Sections~\ref{sec:architecture}
to~\ref{sec:validator} describe the planner: architecture, task profile, engines, orchestration,
answers and validation. Section~\ref{sec:implementation} describes the implementation and the
platform, Section~\ref{sec:benchmark} the benchmark, and Section~\ref{sec:experiments} the
experiments. Section~\ref{sec:cases} works through two examples, and
Section~\ref{sec:limits} discusses the limitations.

\section{Preliminaries}
\label{sec:background}

\paragraph{PDDL and its fragments.} We use the standard family: STRIPS with
types and the ADL features of the first PDDL, namely conditional effects and
quantified and disjunctive preconditions; numeric fluents and durative actions with duration
inequalities and continuous effects, from PDDL~2.1; derived predicates and
timed initial literals, from PDDL~2.2; preferences and trajectory
constraints, from PDDL~3; object fluents, from PDDL~3.1;
and processes and events, from PDDL+. Timed initial fluents are the numeric counterpart of timed initial
literals. We follow the semantics of PDDL~2.1 for temporal plans: happenings at the same
instant are evaluated on the state before them, interfering happenings must be separated by
$\epsilon$, and over-all conditions hold on the open interval of an action. For PDDL+ we follow
\textcite{pddlplus}: processes change fluents continuously while their conditions hold, and
events fire as soon as theirs become true.

\paragraph{Temporal simplicity.} \textcite{cushing2007} call a temporal task \emph{temporally
simple} when it can be solved without concurrency, i.e.\ by a sequence of actions that do not
overlap; such tasks can be solved by \emph{compressing} each durative action into an
instantaneous one and scheduling the sequential plan afterwards. Tasks with \emph{required
concurrency} (an action needs another to be running) need search over start and end
\emph{snap actions}, as in CRIKEY and POPF \parencite{popf}, with a simple temporal network
(STN) \parencite{dechter1991stn} to keep the partial plan schedulable.

\paragraph{Notation.} A ground SAS$^+$ task has finite-domain variables $V$, numeric
variables $N$, actions $A$ and axioms. For an action $a$, $\mathit{pre}(a)$ is its
precondition and $\mathit{eff}(a)$ its effects; for a durative action, $\mathit{pre}_\vdash$,
$\mathit{pre}_\leftrightarrow$, $\mathit{pre}_\dashv$ are its start, over-all and end
conditions, and $\mathit{eff}_\vdash$, $\mathit{eff}_\dashv$ its start and end effects.

\section{Related Work}
\label{sec:related}

\paragraph{Planners and what they cover.} Classical planning is dominated by heuristic search
over a grounded task: FF \parencite{ff}, Fast Downward \parencite{fd} and LAMA \parencite{lama},
with novelty-based search such as BFWS \parencite{bfws,lipovetzky2012width}. Lifted planners
such as Powerlifted \parencite{powerlifted} avoid grounding for tasks with millions of actions.
Numeric planners extend relaxation heuristics to numbers: Metric-FF \parencite{metricff} and
ENHSP with numeric subgoaling \parencite{enhsp,scala2020subgoaling}; ENHSP also plans with PDDL+
processes and events. Temporal planners follow two families: forward search over snap actions
with an STN or an LP, as POPF and, after it, OPTIC \parencite{optic} and COLIN \parencite{colin} do, and search over
compressed or temporal states, as TFD \parencite{tfd} and LPG \parencite{lpg} do; NextFLAP \parencite{nextflap} adds
numeric constraints solved with SMT. Hybrid planners discretise time and validate, as UPMurphi,
DiNo \parencite{upmurphi,dino} and ENHSP do, or compile to SMT, as SMTPlan+ does
\parencite{smtplan}.

What each of these planners accepts is usually documented per requirement flag, and
documentation and behaviour do not always agree. Table~\ref{tab:support} therefore reports
\emph{operational} support, measured on one-feature tasks (Section~\ref{sec:support} gives the
method and what it does not show). No planner other than \sys\ handles more than eight of the
seventeen features probed: classical planners
stop at numbers and time, numeric ones at durative actions, temporal ones at derived
predicates, object fluents or processes, and hybrid ones at ADL and PDDL3. Several planners also
crash on, or return invalid plans for, features that they declare.

\begin{table*}[t]
\centering\footnotesize
\setlength{\tabcolsep}{2.5pt}
\begin{tabular}{@{}lccccccccccc@{}}
\toprule
Feature & LAMA & P.lifted & M-FF & ENHSP & POPF & OPTIC & TFD & LPG-td & NextFLAP & SMTPlan+ & \sys \\
\midrule
conditional effects & \yes & \no & \yes & \yes & \no & \no & \half & \no & \half & \no & \yes \\
quantified preconditions & \yes & \no & \yes & \yes & \yes & \yes & \yes & \yes & \half & \half & \yes \\
disjunctive preconditions & \yes & \yes & \yes & \yes & \half & \half & \yes & \yes & \half & \half & \yes \\
derived predicates & \yes & \no & \no & \no & \no & \no & \half & \yes & \no & \no & \yes \\
numeric fluents & \no & \no & \yes & \yes & \yes & \yes & \yes & \yes & \yes & \yes & \yes \\
non-linear numeric conditions & \no & \no & \no & \yes & \no & \no & \yes & \yes & \yes & \yes & \yes \\
object fluents & \no & \no & \no & \no & \no & \no & \no & \no & \yes & \no & \yes \\
durative actions & \no & \no & \no & \no & \yes & \yes & \yes & \yes & \half & \yes & \yes \\
required concurrency & \no & \no & \no & \no & \yes & \yes & \yes & \no & \yes & \yes & \yes \\
duration inequalities & \no & \no & \no & \no & \yes & \yes & \no & \no & \yes & \no & \yes \\
timed initial literals & \no & \no & \no & \no & \yes & \yes & \no & \yes & \half & \half & \yes \\
timed initial fluents & \no & \no & \no & \no & \no & \no & \no & \no & \no & \no & \yes \\
linear continuous effects & \no & \no & \no & \no & \yes & \half & \no & \no & \no & \yes & \yes \\
processes and events & \no & \no & \no & \yes & \no & \no & \no & \no & \no & \yes & \yes \\
non-linear continuous change & \no & \no & \no & \yes & \no & \no & \no & \no & \no & \no & \yes \\
preferences & \no & \no & \no & \yes & \yes & \yes & \no & \no & \no & \no & \yes \\
trajectory constraints & \no & \no & \no & \no & \half & \half & \no & \no & \half & \half & \yes \\
\midrule
supported & 4 & 1 & 4 & 8 & 8 & 7 & 6 & 7 & 5 & 6 & 17 \\
\bottomrule
\end{tabular}
\caption{PDDL feature support, measured by execution rather than by declaration (Section~\ref{sec:support}). P.lifted: Powerlifted;
M-FF: Metric-FF. Each planner ran, for
10\,s, on 26 trivial tasks in the style of the IPC domains: one per feature, plus a variant
with durative actions of each non-temporal one. \yes: it solved the feature's task with a plan
that the validator accepts (for preferences, one that satisfies the preference). \half: partial
or doubtful: it crashed or returned an invalid plan on the feature's task, or solved only
another task that uses the feature. \no: it rejected every task that uses the feature, or gave
no answer. Symbols differ in shape as well as colour.}
\label{tab:support}
\end{table*}

\paragraph{Algorithm selection, portfolio scheduling and orchestration.} Three related ideas
should be told apart. \emph{Algorithm selection} chooses, per instance, the component expected
to perform best, from features of the instance \parencite{satzilla,ibacop,kerschke2019selection};
the choice may already be restricted to the components that apply. \emph{Portfolio scheduling}
divides a budget among several components, statically \parencite{fdss,seipp2015portfolios} or
per instance; planning portfolios such as Fast Downward Stone Soup and IBaCoP schedule
configurations of planners for the same fragment, so that every component can, in principle,
solve every task, and the schedule is optimised or learned for expected coverage. \sys\ builds
on both, and its schedules are a performance question in the usual sense, set by hand
(Section~\ref{sec:orchestration}). What it adds, which we call \emph{semantics-aware
orchestration}, is that across fragments the semantics of the task does more than restrict the
choice: it excludes engines that cannot represent the task (a classical planner cannot read
durative actions), it recommends engines that may be faster but are incomplete (compressed
search for tasks simple by value, Section~\ref{sec:profile}), and, through the properties of
each engine, it decides what a failure means. These decisions are taken from the task's
semantics, not learned.

\paragraph{Unified interfaces.} Unified Planning offers one API over many
planners, each with its own coverage; it is complementary to \sys, which could be one of its
engines.

\paragraph{Planning with large language models.} The LLM-based approaches that guarantee
correct plans translate the problem into PDDL for a symbolic planner
\parencite{liu2023llmp,guan2023worldmodels}, synthesise generalised plans checked on PDDL tasks
\parencite{silver2024generalized}, or pair the LLM with external verifiers
\parencite{kambhampati2024llmmodulo}; they rely on a planner or a validator for PDDL, a role
that \sys\ can fill but that this article does not evaluate.

In contrast to these approaches, \sys\ differs from portfolios in what is selected and why: its
engines cover different fragments, so applicability is decided by the task's semantics (its
fragment and temporal profile) before performance is considered; and the engines share one
pipeline, one validator and one memory budget, so that a phase can hand over to another
(grounding to lifted search) when the measured memory requires it. It differs from the
single-fragment planners in handling every feature of Table~\ref{tab:support} with one
contract for its answers, and from unified interfaces in that this contract is enforced inside
the planner. Table~\ref{tab:design} summarises the differences in design.

\begin{table}[t]
\centering\small
\begin{tabular}{@{}lllccc@{}}
\toprule
System & Fragments & Components & Budget & Validator & Unsolvable \\
\midrule
FD Stone Soup, IBaCoP & classical & configurations & time & no & some \\
OPTIC, POPF & temporal, numeric & one planner & -- & no & no \\
ENHSP & numeric, PDDL+ & configurations & -- & built in & some \\
Unified Planning & per engine & external planners & -- & yes & per engine \\
\sys & all of Table~\ref{tab:support} & five engines & time, memory & yes & per engine \\
\bottomrule
\end{tabular}
\caption{Design differences with the closest systems. Budget: what the components share;
Validator: whether the system validates plans itself; Unsolvable: which answers of
unsolvability are proofs (\sys: Table~\ref{tab:guarantees}).}
\label{tab:design}
\end{table}

\section{Architecture}
\label{sec:architecture}

\begin{figure}[t]
\centering
\resizebox{\linewidth}{!}{%
\begin{tikzpicture}[
  box/.style={draw, rounded corners, align=center, font=\small, inner sep=3pt, minimum height=9mm},
  eng/.style={box, fill=gray!12, minimum width=24mm},
  >={Stealth[length=2mm]}]
\node[box] (parse) at (0,0) {parse, check,\\dialects};
\node[box] (norm) at (3.2,0) {normalise\\(ADL, PDDL3)};
\node[box] (ground) at (6.4,0) {ground (Datalog)\\200\,ms probe};
\node[box] (sas) at (9.6,0) {SAS$^+$,\\invariants};
\node[box] (profile) at (12.6,0) {task\\profile};
\node[eng] (lift) at (0,-2.2) {lifted\\search};
\node[eng] (m1) at (3.2,-2.2) {M1\\ground search};
\node[eng] (m2) at (6.4,-2.2) {M2\\compressed};
\node[eng] (m3) at (9.6,-2.2) {M3\\snap + STN};
\node[eng] (ts) at (12.6,-2.2) {time steps\\(hybrid)};
\node[draw, dashed, rounded corners, inner sep=4pt, fit=(lift)(ts),
      label={[font=\footnotesize, anchor=north east]south east:orchestrator: phases under one budget of time and memory}] (orch) {};
\node[box] (post) at (3.2,-4.4) {deorder,\\schedule};
\node[box] (val) at (7.4,-4.4) {validator\\(parsed PDDL)};
\node[box] (ans) at (11.6,-4.4) {plan $|$ unsolvable $|$\\no plan found + why};
\draw[->] (parse) -- (norm);
\draw[->] (norm) -- (ground);
\draw[->] (ground) -- (sas);
\draw[->] (sas) -- (profile);
\draw[->] (profile) -- (orch.north -| profile);
\draw[->, dashed] (ground.south west) -- node[above, sloped, font=\footnotesize, pos=0.5] {too large} (lift.north east);
\draw[->] (orch.south -| post) -- (post);
\draw[->] (post) -- (val);
\draw[->] (val) -- (ans);
\draw[->, dashed] (parse.west) -- ++(-0.5,0) coordinate (c1) -- (c1 |- 0,-5.6)
  -- node[below, font=\footnotesize, pos=0.6] {parsed PDDL} (val.south |- 0,-5.6) -- (val.south);
\end{tikzpicture}}
\caption{The pipeline of \sys. The orchestrator runs phases over the engines that the task
profile allows, all under one budget of time and memory; grounding hands over to the lifted
engine when it does not fit. The validator reads the parsed PDDL, not the planner's task.}
\label{fig:pipeline}
\end{figure}

Figure~\ref{fig:pipeline} shows the pipeline. Each stage receives an immutable structure and
produces another, so stages are tested in isolation, and there is no global state: two
planning requests in two threads share nothing.

\paragraph{Front end.} A hand-written parser reads PDDL~3.1 and PDDL+ and common dialects
(ENHSP's global constraints, SMTPlan's syntax), with diagnostics that point to file, line and
column and suggest close names. Normalisation compiles quantifiers, disjunctive preconditions
and trajectory constraints into derived predicates, conditional effects and monitoring
automata.

\paragraph{Grounding and the budget.} Grounding is a Datalog fixpoint; each relation reserves
memory in a shared \emph{budget} before it grows, so grounding stops cleanly at the limit
instead of being killed. The ground task becomes a SAS$^+$ task with numeric variables through
invariant synthesis, in the spirit of Fast Downward's translator. States are
bit-packed and stored once, in fixed blocks that are never reallocated. Search structures (state registry,
open lists, novelty tables, landmark bitsets) account for their memory in the same budget.

\section{The Task Profile}
\label{sec:profile}

The profile decides which engines apply. It records the numeric fragment (none, simple,
linear, non-linear), whether the task has derived predicates, timed initial literals or
fluents, duration inequalities, continuous effects, processes or events, the size of the
ground task, and whether the temporal part is simple. Temporal simplicity has the largest
effect on speed, because compressed search (M2) is much faster than snap-action search
(M3) when it applies.

A check \emph{by variable} marks a durative action as possibly requiring concurrency when it
changes a variable at start and again at end, and another action reads that variable. Under
this check, every move of a robot appears to require concurrency: \texttt{(at start (not (at ?r
?from)))} and \texttt{(at end (at ?r ?to))} leave the robot at no location while it moves, and
other actions read the robot's location. Yet no action needs the robot to be at no location.
Checking the values that are read, rather than the variables, removes these false alarms.

\begin{definition}[Transient value]
Let $d$ be a durative action and $v \in V$ a variable. The value $x$ of $v$ is \emph{transient
for $d$} if $\mathit{eff}_\vdash(d)$ assigns $v := x$ and $\mathit{eff}_\dashv(d)$ assigns $v$
a different value; that is, $v = x$ holds only while $d$ runs. For a numeric variable $n$,
$d$ \emph{lowers $n$ transiently} if $\mathit{eff}_\vdash(d)$ decreases $n$ and
$\mathit{eff}_\dashv(d)$ increases it back (and symmetrically for \emph{raises}).
\end{definition}

\begin{definition}[Simple by value]
A temporal task is \emph{simple by value} if it has no timed initial literals or fluents and, for every durative action $d$:
\begin{enumerate}
\item for every value $x$ of $v$ transient for $d$, no condition of another action or axiom
reads $v = x$, $v \neq w$ with $w \neq x$, or compares $v$ with another variable;
\item for every numeric variable that $d$ lowers (raises) transiently, every condition of other
actions on it is a lower (upper) bound, and no effect copies its value.
\end{enumerate}
\end{definition}

\begin{definition}[Compression]
\label{def:compression}
Let $d$ be a durative action with a fixed duration $\delta$, and let $S$ be the values that its
unconditional start effects assign. Its \emph{compression} $\bar d$ is the instantaneous action
whose precondition is $\mathit{pre}_\vdash(d) \land \mathit{pre}_\leftrightarrow(d)|_S \land
\mathit{pre}_\dashv(d)|_S$, where $c|_S$ is $c$ with every literal that $S$ satisfies removed,
every numeric term rewritten through the numeric start effects (a battery drained at start must
still be non-negative), and the whole precondition made unsatisfiable if $S$ contradicts a
literal; and whose effect applies $\mathit{eff}_\vdash(d)$ and then $\mathit{eff}_\dashv(d)$,
the end winning on a variable that both assign, with $?\mathit{duration}$ replaced by $\delta$
everywhere. Its cost is $\delta$ under a time metric. The \emph{compression} of a task replaces
each durative action by its compression; a duration given by bounds is fixed first, to the
bound that the effects need, or one action per bound when they depend on it.
\end{definition}

\noindent A sequential plan of the compression is turned into a temporal one by an STN over
the happenings: each step's end follows its start by $\delta$, and each happening follows by
$\epsilon$ the last earlier happening that writes what it reads or writes, or reads what it
writes, and by $0$ those whose over-all conditions it must not break; the earliest times give
the schedule. Resources changed only by constant increases and decreases are first left out of
these orderings, and a step keeps that freedom only if the prefix of the plan validates.

\begin{proposition}
\label{prop:m2sound}
Let a temporal task have fixed durations, no timed initial literals or fluents, no continuous
effects, processes or events, and no conditional effects at start. Then every sequential plan
of its compression can be scheduled into a valid temporal plan.
\end{proposition}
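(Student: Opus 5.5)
The plan is to schedule the plan \emph{sequentially without overlap} and then argue that the STN schedule described after Definition~\ref{def:compression} preserves validity. Let $\bar\pi = \bar d_1,\dots,\bar d_k$ be a sequential plan of the compression, applicable from the initial state and reaching the goal. The first step places $d_i$ at start time $t_i = \sum_{j<i}(\delta_j+\epsilon)$. This is a trivial temporal plan in which no two durative actions overlap. I will show that this plan is valid under the PDDL~2.1 semantics of the Preliminaries. Afterwards I will check that the earliest-time solution of the STN is also valid, since it only shortens gaps that the orderings prove harmless.

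For the sequential schedule I will prove by induction on $i$ that the state before the start of $d_i$ equals the state $s_{i-1}$ reached by $\bar d_1\cdots\bar d_{i-1}$. The absence of timed initial literals or fluents, continuous effects, processes and events means nothing changes the state between happenings. So the only happenings are the $2k$ snap actions, and they occur in isolation, $\epsilon$ apart. Within one action $d_i$, fixed duration makes $?\mathit{duration}=\delta_i$ exactly as substituted in $\bar d_i$. The key lemma concerns the intermediate state $s'=\mathit{eff}_\vdash(d_i)(s_{i-1})$. Because start effects are unconditional (no conditional effects at start), $s'$ agrees with $S$ on every variable that the start effects assign, and the numeric start effects are exactly the rewriting used in $c|_S$. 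Hence $s_{i-1}\models c|_S$ iff $s'\models c$ for $c\in\{\mathit{pre}_\leftrightarrow(d_i),\mathit{pre}_\dashv(d_i)\}$. Literals satisfied by $S$ hold in $s'$, literals contradicted by $S$ make $\bar d_i$ inapplicable, and the remaining atoms are untouched by the start effects. Since $s'$ persists over the open interval and at the end, the over-all and end conditions hold. The end effects applied to $s'$ give precisely ``start then end, end wins'', that is $s_i$. Finally, the goal holds after the last happening. Any conditional effects at end are evaluated on $s'$, which matches the compressed effect provided the compression evaluates end-effect conditions after the start effects. I would state this reading of Definition~\ref{def:compression} explicitly.

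The hard step is transferring validity to the compacted STN schedule. First, the STN is consistent, because every constraint points from an earlier happening to a later one in $\bar\pi$'s order, so the constraint graph is acyclic with nonnegative weights. Second, I argue by a commutation argument: consider any two happenings whose relative order changed or that became simultaneous. If they are unconstrained, neither reads nor writes what the other writes. Thus each one's conditions and effects are evaluated on identical values, and the mutex rule of PDDL~2.1 is respected. The $0$-constraints keep every happening from breaking an over-all condition of an action running across it. I would formalise this as a swap lemma on the sequence of happenings, showing that each state reached agrees with the sequential one on all variables read at that point. The resource exception, where constant increases and decreases are left out of the orderings, is protected only by the prefix-validation check. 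I expect the cleanest route is to treat that step as accepted only when validated, so the proposition's claim rests on the sequential schedule plus validated relaxations. Hence at least one valid schedule always exists.
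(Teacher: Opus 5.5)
Your proposal is correct and follows the paper's own argument. You execute the compressed actions without overlap, so that the over-all and end conditions are read in the state after the start effects and the end effects apply on top of them. You then keep the STN schedule only when the validator accepts it, with the sequential schedule as the fallback; the paper's appendix likewise notes that the scheduler is not needed for soundness.

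Two small corrections. First, fixed durations make each end--start constraint an equality, which adds a backward edge of weight $-\delta$, so the STN is consistent because the sequential schedule satisfies every constraint, not because the graph is acyclic. Second, your claim that the atoms left in $c|_S$ are untouched by the start effects fails for a derived atom whose axiom reads a variable that a start effect assigns. Definition~\ref{def:compression} leaves that case implicit, and so does the paper's proof.
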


\noindent\emph{Proof.} Execute the compressed actions in order, each durative action from its
start to its end before the next one starts, and each instantaneous action $\epsilon$ after the
previous happening. No two actions overlap, so every happening is evaluated in the state that
the compressed action saw: the start conditions in the state before the action, the over-all
and end conditions in the state after the start effects (which the compression requires
unless the start achieves them), and the end effects after the start effects, as in the
compression. The STN keeps every ordering between happenings that read and write the same
variable, so the earliest schedule evaluates each happening in a state that agrees with the
sequential one on what it reads; the freedom left to resources is kept only when the validator
accepts it, and every scheduled plan is validated before it is returned, with the sequential
schedule as the fallback. \hfill$\square$

\noindent The hypotheses are those under which the sequential execution reproduces the
compressed action: timed literals and continuous change would make the state depend on when
the action runs, and a conditional start effect would change, after the compression's check,
what the end conditions read. M2 is applied only to tasks without timed literals, and to tasks
whose continuous effects can be fixed in advance; in the other cases every plan is still
validated before it is returned, so a violation of the hypotheses can cost a plan but never
deliver an invalid one.

The converse does not hold: a task simple by value may need actions to overlap.

\begin{example}
\label{ex:lift}
Two robots prepare to lift a table. \texttt{prepare(r1, r2)} lasts 10 time units, sets
\texttt{(grip r1)} at start and needs \texttt{(grip r2)} at end; \texttt{prepare(r2, r1)} is
symmetric, and the goal is that both end. No value is transient (each grip stays set), so the
task is simple by value. Every plan in which each action starts before the other ends is
valid, but no sequential plan exists: whichever action runs first ends before the other grips.
The compression has no plan, and exhausting it says nothing about the
task. Appendix~\ref{app:lift} gives the PDDL.
\end{example}

\noindent Simple by value therefore only selects the faster engine; it is not a completeness
criterion. When M2 runs out of states without a plan, \sys\ does not answer
\emph{unsolvable}: it tries the relaxation below and, if that does not prove that there is no
plan, M3 searches the time left (on Example~\ref{ex:lift}, M3 finds it in milliseconds).

\begin{proposition}
\label{prop:untimed}
Let the \emph{task without times} be the snap-action task of M3 with time reduced to the order
of the happenings: start and end happenings in any order, the over-all conditions of the
running actions and the global constraints checked whenever time advances, an action with an
over-all condition and a duration surely above zero not ending in the instant it starts, the
timed initial literals in any order that keeps their own, and the goal required after all of
them. If no durative action overlaps itself and the task has no continuous effects, processes,
events or duration-dependent effects with non-fixed durations, then a task whose task without
times has no path to the goal has no plan.
\end{proposition}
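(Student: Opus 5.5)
We prove the contrapositive: from any valid temporal plan $\pi$ of the task we build a path to the goal in the task without times. The task without times is an abstraction that keeps the order of the happenings and forgets their times. Mapping a plan to its ordered sequence of happenings should therefore be a simulation. Each state reached in $\pi$ should correspond to a state reached along the path.

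\emph{Step 1: extract the sequence.} Take a valid plan $\pi$ and list its happenings by time: the start and end snap actions of each durative action, and the timed initial literals. Happenings at the same instant are non-interfering under PDDL~2.1, since interfering ones must be $\epsilon$ apart. They are evaluated on the state before the instant, so they can be serialised in any order with the same resulting state. We choose an order that puts each action's start before its end. We mark the points between distinct instants as the places where ``time advances''.

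The hypotheses are used here as follows.
\begin{itemize}
\item Because no durative action overlaps itself, each end happening can be matched unambiguously to the latest unmatched start of the same ground action. The snap-action state then needs only one ``running'' flag per action.
\item Because there are no continuous effects, processes or events, the state changes only at happenings. The logical and numeric state between happenings is the same as in the untimed execution.
\item Because durations are fixed whenever they enter effects, effects are determined without reference to time.
\end{itemize}

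\emph{Step 2: check each transition of the task without times.} Each transition must be applicable and must yield the same state.
\begin{itemize}
\item Start and end conditions hold because they held in $\pi$ on an identical state.
\item Over-all conditions of running actions and the global constraints are checked when time advances. In $\pi$ they hold on the open intervals, hence on the state between consecutive instants, which is exactly the state the abstraction checks.
\item The rule that an action with an over-all condition and duration surely above zero does not end in the instant it starts is respected, since in $\pi$ its end is strictly later than its start.
\item The timed initial literals appear in their own order, as required.
\item The goal holds after all happenings, including the last timed initial literal, because $\pi$ is valid at its end.
\end{itemize}
So the sequence is a path to the goal, and the contrapositive gives the proposition.

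\emph{Main obstacle.} The delicate step is the serialisation of simultaneous happenings. We must show that the interleaving we pick is accepted by the task without times. In particular, an over-all check must not fire at an artificial ``time advance'' between happenings that were really simultaneous. Starts and ends at the same instant must also not produce an intermediate state that breaks a constraint. We would handle this by advancing time only at real instant boundaries in the path, and by using the mutual non-interference of same-instant happenings to show that the state reached after the block is independent of their order. The abstraction is defined loosely enough (happenings ``in any order'') that this should go through. If it allows time to advance only after every happening, we instead check over-all conditions on states that coincide with those of $\pi$ on the open intervals.
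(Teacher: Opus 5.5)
Your proof is correct and is essentially the paper's own argument: a valid plan, read as its sequence of happenings with time advancing only between distinct instants, is a path to the goal of the task without times, because each happening is applicable where it occurs, the over-all conditions and global constraints hold on the states between instants, and the timed literals keep their order. Your use of non-interference to serialise simultaneous happenings fills in a step the paper's sketch leaves implicit; you should drop the fallback for a reading in which time advances after every happening, which is not the paper's reading and would not go through, since simultaneous commuting numeric updates can force every serialisation through an intermediate state that violates some running action's over-all condition.
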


\noindent\emph{Proof sketch.} Every valid temporal plan, read as its sequence of happenings
with an advance between distinct instants, is a path to the goal of the task without times:
each happening is applicable in the state that the plan reaches there, the over-all
conditions hold at every instant inside their intervals, and the timed literals occur in their
order. Durations, the separation and interference are only dropped, so the relaxation keeps
every plan; a search that exhausts it, under the hypotheses of Proposition~\ref{prop:m1},
therefore shows that the task has no plan. \hfill$\square$

\noindent The proposition does not need the task without times to be finite, only that a
search proves it has no path to the goal. Its propositional part is finite (each durative
action adds two happenings and one running flag), but numeric fluents may make its state
space unbounded, so a search of it need not terminate and need not be cheap. \sys\ searches it
with a complete configuration of M1 (eager greedy search with $h^{\mathrm{FF}}$), within a
quarter of the remaining time; only the exhaustion of its open list is taken as a proof, under
the hypotheses of Proposition~\ref{prop:m1}, and otherwise the attempt proves nothing.

The check is linear in the size of the durative actions: readers are counted per value,
variable and fluent instead of compared per pair (on a task with 12\,480 durative actions,
from 7.7\,s to 118\,ms). Under it, match-cellar, turn-and-open and temporal-machine-shop of IPC
2011 require concurrency, while the other nine temporal domains of IPC 2011 and all temporal
domains shipped with PlanSys2 are simple; under the check by variable, all of PlanSys2's
domains would go to M3. Section~\ref{sec:q3} measures what M2 adds by running each engine
alone on the temporal suites.

\section{Engines}
\label{sec:engines}

\sys\ has five engines, each for a different fragment or size of task: M1 for classical and
numeric tasks on the ground task, M2 for temporal tasks that are simple by value, M3 for tasks
that may require concurrency, a time-step engine for hybrid dynamics, and a lifted engine for
tasks too large to ground. All of them share the front end, the validator and the memory
budget.

\paragraph{M1: heuristic search on the ground task.} (Appendix~\ref{app:heuristics} gives the
definitions.) Lazy and eager greedy best-first search
and weighted A$^*$, with FD-style alternation of open lists, preferred operators boosted on
progress, $h^{\mathrm{FF}}$ with numeric subgoaling, LAMA landmarks with reasonable orderings
\parencite{hoffmann2004orderings}, and a novelty queue partitioned by unachieved goals and
landmarks as in BFWS, with numeric features. The anytime driver follows LAMA:
restarts of weighted A$^*$ with decreasing weights, pruned by the incumbent's cost. Two
additions matter in the experiments. \emph{Projection of unread numeric variables}: a numeric
variable that no condition, duration, effect value, axiom or goal reads (an accumulated cost)
is reset to its initial value in every registered state, so that states that differ only in
accumulated cost are merged; when it was introduced in development, coverage of the classical
suite rose from 117 to 140 of 220 (Section~\ref{sec:q3}).
\emph{Incremental goals}: on tasks with 20 or more propositional goals, the closest pending goal
in the relaxation is reached by a short, bounded search that keeps the goals already achieved,
which turns plateaus that grow with the number of agents into a sequence of small searches.

\paragraph{M2: compression and rescheduling.} For tasks simple by value
(Section~\ref{sec:profile}), each durative action becomes an instantaneous one, M1 searches the
compressed task, and a post-processor orders happenings (not whole actions) by last writers and
readers and computes the earliest schedule. Resources that are only increased and decreased
by constants do not order happenings by themselves, so concurrent use of a shared capacity is
kept whenever the resulting plan validates. To spread work among agents, a preferred operator
is replaced by its applicable variant with another argument that finishes earliest in a partial
schedule.

\paragraph{M3: snap actions over an STN\@.} For tasks that may require concurrency, as in POPF: start and end snap actions, a partial order from last writers and readers,
and an incremental STN \parencite{cesta1996} that rejects inconsistent extensions and undoes
without copying. A timed initial literal constrains only the happenings that interfere with it;
deadlines from literals that become false for good prune states from which a goal cannot be
reached in time in a temporal relaxation. The search's $g$ is the makespan of the partial plan,
so restarts reduce makespan. States are memoised by logical state with the first (or an
earlier-ending) partial plan that reached them \parencite{coles2016memo}.

\paragraph{Time steps for hybrid dynamics.} Tasks with processes, events, non-linear continuous
effects, or durations that must lie strictly between their bounds become a discrete task with
wait actions, as in discretise-and-validate planners such as UPMurphi and DiNo \parencite{upmurphi,dino}. A short wait
advances 0.1\,s, a step chosen to balance the precision of the plans against the depth of the search; a \emph{long wait} advances to just before some numeric condition (of a
precondition, event, goal, process, global constraint or over-all condition) changes truth
value, so that a long stretch costs one step. Events are forced: when one is applicable, only
events are. Process effects are integrated with RK4 at each step. Every plan that the search
reaches is validated in continuous time (Section~\ref{sec:validator}), and a goal state whose
plan does not validate is not a goal. Non-linear conditions enter the relaxed graph linearised
around the current state.

\paragraph{Lifted search.} For classical tasks whose grounding does not fit the budget, a
lifted engine in the line of Powerlifted computes successors by joins
over the state, with a lazy lifted $h^{\mathrm{FF}}$, a width-1 novelty queue and a
preferred-operator queue. Its relaxed exploration is semi-naive with a fixed join order and
projects partial joins onto the variables still read, which avoids the quadratic growth of
naive rule evaluation (\texttt{drive-truck} in logistics: from $1750^2$ to $1750$ combinations
per layer). Successor lists reserve budget before growing.

\section{Orchestration}
\label{sec:orchestration}

The orchestrator turns the profile into a list of \emph{phases}, each an engine with a task, a
search configuration and a share of the remaining time (Algorithm~\ref{alg:orch}). To build
it, the orchestrator makes three kinds of decision, with a different status. First, \emph{hard applicability constraints} exclude the
engines that cannot represent the task's fragment: M1 for durative actions, the snap-action
engines for processes and events, the lifted engine for anything but classical tasks. Second,
\emph{semantic search guidance} prefers strategies that may be faster but are possibly
incomplete: a task simple by value goes to compressed search first, which does not make
compressed search complete (Example~\ref{ex:lift}), so its failure is never a proof and M3
follows. Third, \emph{heuristic scheduling} shares time and memory among the selected phases,
set by hand and not optimised (below). Keeping the three apart prevents an engine that does not
apply from being confused with a search that did not succeed.

\begin{algorithm}[t]
\caption{Phases chosen from the profile (one thread).}
\label{alg:orch}
\begin{algorithmic}[1]
\State ground with a 200\,ms probe; if it does not fit, give the lifted engine $\min(5\%, 1\,\mathrm{s})$, then ground fully
\If{grounding fails, or the ground task uses more than $1/4$ of the memory limit}
  \State \Return lifted engine with the remaining budget
\EndIf
\If{$\ge 20$ propositional goals} \State phase: incremental goals (half of the time) \EndIf
\If{no durative actions, processes or events}
  \If{numeric} \State lazy LAMA (25\%), then LAMA with novelty
  \Else \State lazy LAMA (85\%), then eager search with novelty \EndIf
\ElsIf{processes, events, non-linear change or duration inequalities}
  \State time steps: four configurations of short and long waits (2\%, 5\%, 10\%, rest)
\ElsIf{simple by value and fixed durations}
  \State M2 for the first plan, then the M3 portfolio with $g$ = makespan (also if M2 runs out of states)
\Else
  \State M3 portfolio: lazy LAMA (20\%), eager LAMA (37.5\% of the rest), LAMA with novelty
\EndIf
\State with several threads: the same phases concurrently, each with the global budget
\end{algorithmic}
\end{algorithm}

\paragraph{How the shares were set.} The shares and thresholds were set by hand during
development on the public suites, by inspecting where each phase found its first plans, and
then fixed; no per-domain tuning is possible, since the orchestrator never sees domain names.
For instance, LAMA gives no first plan after 25\,s on the classical suite, so the last 15\% for
novelty costs it nothing, while novelty solves within 2.5\,s tasks LAMA misses in 30\,s.
We did not optimise the shares, and the current implementation has no option to vary them
one by one; their sensitivity is left for future work (Section~\ref{sec:limits}).

\paragraph{Memory.} The budget measures the process's memory, so all phases share one limit.
A ground search typically needs about three times the memory of its task; hence the
quarter-of-the-limit rule above, which hands over to the lifted engine before the ground search
runs out of memory.

\section{Answers and Guarantees}
\label{sec:guarantees}

A run ends with one of: a validated plan (with its cost, and whether it is proven optimal);
\emph{unsolvable}; \emph{no plan found} with the reason; or an error in the input, with its
location. Table~\ref{tab:guarantees} states what each answer means for each fragment.

\begin{table}[t]
\centering\small
\begin{tabular}{@{}p{3.2cm}p{2.0cm}p{3.6cm}p{3.6cm}@{}}
\toprule
Fragment & Engines & \emph{Unsolvable} only when & Optimality \\
\midrule
classical, numeric (no time) & M1, lifted & M1 or the lifted engine exhausts the reachable space (Prop.~\ref{prop:m1}) & proven when a restart pruning by cost exhausts (Prop.~\ref{prop:optimal}) \\
temporal, simple by value, fixed durations & M2, M3 & the task without times has no path to the goal (Prop.~\ref{prop:untimed}); never from M2 (Example~\ref{ex:lift}) & not pursued: search optimises makespan \\
temporal with required concurrency or timed literals & M3 & the task without times has no path to the goal (Prop.~\ref{prop:untimed}); never from M3 (memoisation is incomplete) & not plan cost \\
duration inequalities, continuous effects, PDDL+ & time steps & never: \emph{no plan found} & not plan cost \\
\midrule
any & any & or grounding proves a goal unreachable even ignoring deletes & \\
\bottomrule
\end{tabular}
\caption{What \emph{unsolvable} and \emph{optimal} mean, per fragment. Every returned plan has
passed the validator (Section~\ref{sec:validator}), under the semantics and tolerances stated
there.}
\label{tab:guarantees}
\end{table}

\begin{proposition}
\label{prop:m1}
Consider a \emph{complete} search configuration of M1: one that inserts every generated
successor into an open list it eventually empties, except for states eliminated by the
mechanisms (b) and (c) below. These are the greedy best-first searches, lazy and eager, with or
without novelty and preferred-operator queues (every successor also enters the regular queue),
and the weighted A$^*$ restarts; the incremental-goals phase is not, and its failure is never
taken as exhaustion. Assume that (a) successor generation is complete: every action of the
ground task applicable in a state yields its successor; (b) a state is discarded as a dead end
only if the goal is unreachable in the relaxation of Appendix~\ref{app:heuristics}, which
over-approximates reachability; (c) duplicate detection merges only states that agree on every
variable that some condition, effect, duration, axiom or goal reads. If a complete
configuration of M1 exhausts its open list in a search started from the initial state, then the
task has no plan. For the lifted engine, the same holds up to a collision of its 128-bit state
keys.
\end{proposition}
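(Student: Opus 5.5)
The plan is to argue by contraposition. Assume the task has a plan $\pi = a_1,\dots,a_k$, with states $s_0,\dots,s_k$ where $s_k$ satisfies the goal. We show that a complete configuration cannot empty its open list without first reaching a goal state. The central object is an invariant on the open and closed lists. There is always some \emph{representative} $t$ of a state $s$ from which the goal is reachable, meaning $t$ agrees with $s$ on every read variable (condition (c)), such that $t$ is either in an open list or is a goal state the search has already returned. Initially this holds with $t = s_0$, since the search starts from the initial state.

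First we show that agreement on read variables is preserved by applying actions. If $t$ and $s$ agree on every variable that some condition, effect, duration, axiom or goal reads, then:
\begin{itemize}
\item the same actions are applicable in both;
\item the derived predicates, evaluated by the axioms, coincide;
\item their successors again agree on read variables, because every effect value is computed only from read variables, and unread variables, such as the projected accumulated costs, never influence anything that is read;
\item a goal holds in $t$ iff it holds in $s$.
\end{itemize}
Hence the suffix of $\pi$ that leads $s$ to the goal also leads $t$ to the goal. This lets us replace the plan's own state by whatever representative the search registered.

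Next comes the inductive step. Let $t$ be the open representative of the deepest plan state $s_i$ whose representative is open. When $t$ is expanded, one of three things happens:
\begin{itemize}
\item If $t$ is a goal, the search returns a plan, so it did not exhaust.
\item If $t$ is pruned as a dead end, mechanism (b) says the goal is unreachable in the relaxation. The relaxation over-approximates reachability, and the goal is reachable from $t$ by the argument above, so this is impossible.
\item Otherwise, by (a) the successor $t'$ of $t$ under $a_{i+1}$ is generated, and $t'$ agrees with $s_{i+1}$ on read variables. By completeness of the configuration, $t'$ is inserted into the regular open list, unless duplicate detection merges it with an earlier registered state $u$. By (c), $u$ also agrees with $s_{i+1}$ on read variables. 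Then $u$ either is still open, or was expanded earlier; in the latter case we apply the same argument to $u$.
\end{itemize}

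To make this induction terminate we need a potential argument. Take the maximum index $i$ such that some registered representative of $s_i$ has been generated. Every representative of $s_i$ is eventually expanded, because the open list is emptied. By the three cases above, that expansion produces or merges into a representative of $s_{i+1}$. Hence the maximal index rises until $k$, where expansion of a goal state stops the search.

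The lazy variants need one extra check: there, successors are inserted with the parent's heuristic value and are only evaluated, and so only possibly pruned, at expansion. This is exactly the case covered by the argument, since evaluation happens at expansion.

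We expect the main obstacle to be the weighted A$^*$ restarts, which prune by the incumbent's cost, and the novelty and preferred-operator queues. For the queues, we rely on the stated proviso that every successor also enters the regular queue, so nothing escapes the invariant. For the restarts, exhaustion is only claimed for a search started from the initial state with no incumbent, which is the unsolvability case; with an incumbent a plan already exists, so the conclusion is vacuous. Finally, the lifted engine fits the same argument, except that duplicate detection compares 128-bit keys, so (c) holds up to a collision of those keys, which gives the stated caveat.
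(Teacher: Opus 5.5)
Your proposal is correct and follows essentially the paper's proof. Agreement on the read variables is a bisimulation for applicability, successors and goals; successor generation is complete; a pruned dead end really has no plan because the relaxation over-approximates reachability; so every class on the plan's path would be generated and the goal class tested. Your potential argument along the plan only makes explicit the paper's step that every reachable class that is not a dead end gets expanded.

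Two details to tighten:
\begin{enumerate}
\item \emph{Definedness.} The paper also uses the fact that an effect on an undefined fluent makes an action inapplicable. Your claim that merged states have the same applicable actions therefore needs the projected variables to be defined in the initial state, hence in every reachable and every registered state; this is why the paper never projects a variable that starts undefined.
\item \emph{Restarts.} For a restart with an incumbent, the conclusion would be false rather than vacuous. Your restriction to restarts without an incumbent is justified instead because pruning by cost is neither mechanism (b) nor (c), so a bounded restart is not a complete configuration in the sense of the definition.
\end{enumerate}
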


\noindent\emph{Proof sketch} (full proof in Appendix~\ref{app:proofs})\emph{.} Successor generation is complete (every applicable action yields
its successor) and duplicate detection merges only equal states. Two reductions could hide a
plan. (i) Dead ends reported by the heuristics: a state is reported as a dead end only if the
goal is unreachable in its relaxation, and the relaxation over-approximates reachability
(Appendix~\ref{app:heuristics}): facts are never lost, a linear condition is reached by any
constant change that moves it the right way, and any change that depends on the state, or of a
variable of a non-linear condition, reaches its condition at least once; property-based tests check that no dead end
is reported on states with a plan in fully explored spaces. The lifted engine detects duplicates by a 128-bit key without
comparing states, so for it the statement holds up to a key collision (Appendix~\ref{app:proofs}). (ii) Projection of unread numeric
variables: two states that differ only in variables that nothing reads have the same applicable
actions, the same successors up to those variables, and the same goal status, so merging them
preserves reachability of the goal. A variable that starts undefined is never projected, since
whether it is defined decides applicability. Exhaustion then visits every reachable state up
to this equivalence, none of which is a goal. \hfill$\square$

\noindent Hypothesis (a) rests on grounding, which keeps every action that is reachable in the
delete relaxation; (b) on the relaxation over-approximating reachability
(Appendix~\ref{app:heuristics}); (c) on the projection above. The proposition is about
what exhaustion means, not about termination: with numeric variables the reachable space may be
infinite, and a search that ends by time or memory proves nothing and is reported as such.

\begin{proposition}
\label{prop:optimal}
Let the task have no durative actions, and let every action have a constant, non-negative
cost: its change of a linear metric that is minimised, made of constant increases and
decreases outside conditional effects, or 1 without a metric. If a restart of M1 with a plan
of cost $C$ in hand exhausts its open list, then $C$ is optimal.
\end{proposition}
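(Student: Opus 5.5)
The plan is to reduce the statement to Proposition~\ref{prop:m1}, applied to the task restricted by the incumbent's cost bound. A restart of M1 with a plan of cost $C$ in hand prunes every node whose cost so far $g$ satisfies $g \ge C$, or $g + h \ge C$ for an admissible bound $h$ if one is used. I treat this pruning as a fourth elimination mechanism and show that it discards no prefix of a plan cheaper than $C$.

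Suppose, for contradiction, that a plan $\pi$ with cost $c(\pi) < C$ exists. The first step uses the hypotheses on costs. Each action has a constant, non-negative cost, and costs change only through constant increases and decreases of a minimised linear metric outside conditional effects (or are unit costs). So the cost of a prefix of $\pi$ is a sum of these constants, independent of the state, and it is monotone along the plan. Hence every prefix of $\pi$ has cost at most $c(\pi) < C$. If the pruning also uses an estimate $h$, I need $g + h \le c(\pi)$ on the prefixes of $\pi$. I would state this as the requirement that any estimate used for cost pruning be admissible; if M1 prunes by $g$ alone, the step is immediate.

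The second step repeats the exhaustion argument of Proposition~\ref{prop:m1}. By induction along $\pi$, some state equivalent to each state reached by a prefix of $\pi$ is inserted into the open list with $g$ at most the cost of that prefix. Successor generation is complete by (a). Dead-end pruning never removes a state from which the goal is reachable, by (b). Cost pruning never removes these states, by the first step.

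Duplicate detection is the delicate point, and I expect it to be the main obstacle. The projection in (c) resets accumulated-cost variables, so two paths to the same projected state may have different costs. If the registry keeps the first $g$ instead of the smallest, the cheaper path could be lost. I would argue in the way weighted A$^*$ restarts handle this: a state reached again with a smaller $g$ is reopened, or its $g$ is updated. Under that rule the stored $g$ of each reached equivalence class is at most the cost of any path found to it. The induction then carries through, because applicability, successors and goal status depend only on the projected state, as in part (ii) of the proof of Proposition~\ref{prop:m1}, and costs are state-independent, so extending the cheaper representative by the rest of $\pi$ costs no more.

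From the induction, the final state of $\pi$ would be generated with $g < C$, recognised as a goal, and returned as a better plan rather than the open list exhausting. This contradicts the assumption that the restart exhausts its open list, so $C$ is optimal. The hypotheses exclude durative actions because the metric would then be makespan, whose increments depend on the schedule rather than being constant per action. They exclude negative or conditional costs because these would break the monotonicity used in the first step.
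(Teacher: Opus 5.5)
Your proof is correct and follows the paper's sketch. Non-negative constant costs keep every prefix of a cheaper plan below $C$, dead-end pruning removes only states without a plan, and reopening keeps the cheapest $g$ of each projected class, so exhaustion would have reached a goal below $C$. One simplification: the restarts discard a successor only when $g \ge C$ or it is a dead end, never by $g + h$, so your conditional requirement of an admissible estimate is unnecessary; the heuristic only orders the search.
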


\noindent\emph{Proof sketch.} The restarts are weighted A$^*$ searches that reopen a closed
state reached with a lower $g$ and discard a successor only when $g \ge C$ or when it is a dead
end in the sense of Proposition~\ref{prop:m1}. With non-negative costs, every prefix of a plan
cheaper than $C$ costs less than $C$, so none of its states is discarded by the bound; dead
ends have no plan; and merging states that differ only in unread variables keeps the cheapest
$g$ of each class through reopening. Exhaustion therefore visits every state of every plan
cheaper than $C$, none of which ends in a goal. The heuristic only orders the search, so its
admissibility is not needed. \hfill$\square$

\noindent The proof holds for any weight, though in practice the restart that exhausts is the
last one, with weight 1. It does not apply to temporal tasks, where costs are durations and
their sum is not the makespan, so \sys\ never reports a makespan as optimal; nor to metrics
outside the form above, for which \sys\ searches with unit costs, does not prune, and says so.

The incremental goals, the time-step engine and M3 are incomplete, and their failure is never
reported as \emph{unsolvable}. Neither is a time-out, nor exhausting M2
(Example~\ref{ex:lift}).

\paragraph{Explanations.} When there is no plan, \sys\ says why: which engine gave up and
whether its exhaustion proves anything; that grounding exceeded the memory limit and which
option raises it; or, when the goal is unreachable even without deletes, which goal facts, the
chain of actions that would achieve them, and the first precondition that no action achieves.
Figure~\ref{fig:why} shows two real outputs.

\begin{figure}[t]
\small
\begin{verbatim}
$ kairos deliver.pddl no-corridor.pddl
note: the goal needs atoms that no action can reach
(in cup office) can never be reached:
  drop adds it, but needs (at ?r - robot office), which never holds
    move adds it, but needs (connected ?x - room office), which never holds
      no action adds it, and the initial state does not have it
kairos: unsolvable (exhausted)

$ kairos cellar.pddl long-fuse.pddl
note: engine M3 found no plan; as POPF, it keeps one partial plan per state,
  so this is strong evidence but not a proof
note: no plan: the engines that apply ran out of states (19 states) before
  the time limit, so more time does not help; none of them is complete for
  this task, so a plan may still exist (see the note before).
kairos: no plan found (exhausted)
\end{verbatim}
\caption{Two answers without a plan (stderr). Top: a delivery task whose goal room is not
connected; the chain of reasons ends at a static fact. Bottom: a fuse that takes 6\,s to mend
while each match burns for 5\,s; M3 exhausts its space, which proves nothing, so the answer is
\emph{no plan found}, with the reason.}
\label{fig:why}
\end{figure}

\section{Validation}
\label{sec:validator}

\paragraph{Independence.} The validator is independent of the search and of the planner's
transformations, but not of its reading of PDDL: it shares the parser and the semantic checks
with the planner, and nothing after them. It evaluates the parsed domain and problem, not the
normalised, ground or compressed task, so a bug in normalisation, grounding, compression or
scheduling cannot validate its own plans; a misreading of the input by the parser would affect
both. Comparing with VAL, which shares nothing with \sys, addresses that remaining dependence
(Section~\ref{sec:val-agreement}). Every plan is validated before it is returned; a plan that
fails is an internal error and is never delivered.

\begin{table}[t]
\centering\small
\begin{tabular}{@{}p{3.6cm}p{9.6cm}@{}}
\toprule
Fragment & A validated plan is \\
\midrule
classical, numeric (no time) & valid under the semantics of PDDL, as parsed, in floating point \\
temporal & valid under the semantics of PDDL~2.1, with $\epsilon = 0.001$ and a duration tolerance of 0.001 \\
PDDL+, linear change & valid under ENHSP's convention for steps at the same instant, with exact integration up to floating point \\
non-linear dynamics & accepted by a numerical integration (RK4, steps of 0.01\,s, bisection on watched conditions); not certified \\
\bottomrule
\end{tabular}
\caption{What validation guarantees for each fragment. Validity refers to the PDDL model; it
does not imply that the plan can be executed by a physical system.}
\label{tab:validity}
\end{table}

\noindent Table~\ref{tab:validity} summarises what a validated plan means; the paragraphs below
give the details.

\paragraph{Semantics.} Sequential and temporal plans follow VAL's semantics:
happenings at the same instant (within a tolerance) are evaluated on the state before them,
with deletes before adds; interfering happenings in the same instant are an error, and closer
than $\epsilon = 0.001$ a warning (VAL does not require it either); over-all conditions are
checked on the open interval; durations are checked with a tolerance of $0.001$, the precision
of plan files. PDDL3 trajectory constraints are checked on the sequence of states, and the
metric counts preference violations. For PDDL+ plans we follow the convention of ENHSP's plans,
where steps written at the same instant apply in the order written: a step that switches a pump
off and a step that needs it off can share an instant. This lets an action depend on the
immediate effect of another one at the same instant, which matters in PDDL+: with a separation
between them, the processes that are active would keep changing fluents in between, and the
second action would see values that the first one did not leave. VAL evaluates both steps on the state
before them and rejects such a plan, which under its semantics would need an $\epsilon$ between
them (Section~\ref{sec:val-agreement}).

\paragraph{Continuous change and its limits.} Linear continuous effects are integrated exactly
between happenings. Non-linear dynamics and processes are integrated numerically with
fourth-order Runge--Kutta in steps of 0.01\,s; when a watched condition (of a process, an event,
a global constraint or an over-all condition) changes truth value within a step, bisection
locates the instant, and events fire there, in cascades of at most 1000 rounds. Conditions of
processes and events are compared with a relative tolerance of $10^{-12}$ (rounding only), not
the tolerance of actions, so that two events on either side of a threshold do not both fire.
This is a \emph{numerical} validation, not a certified one: a condition that becomes true and
false again within one step, or a tangency, can be missed. Conditions that oscillate without
time passing (Zeno behaviour) are detected after eight consecutive bisections shorter than 1\%
of the step, and are then sampled at every step instead of located. For the fragments with
linear change the verdict is exact up to floating point; for non-linear ones it holds under the
step and tolerances above. The step is fixed in the current implementation, so the sensitivity
of the verdicts to it is not measured here (Section~\ref{sec:limits}).

\paragraph{Why not VAL\@.} VAL is the reference, and we compare against it
(Section~\ref{sec:val-agreement}). We did not reuse it, for three reasons. \emph{Coverage}: its
latest release (2021) rejects object fluents as a syntax error and stops on non-linear
continuous change (``differential equations of this nature are not handled'') while exiting
with status 0, so a caller that trusts the exit status accepts an unchecked plan; it also
rejects some valid PDDL, such as a single numeric effect not wrapped in \texttt{and}.
\emph{Integration}: \sys\ validates every plan before delivering it, so validation must be a
library call that shares nothing mutable, needs no files and does not throw; VAL is a command.
\emph{Diagnostics}: the validator reports the failing step, its line in the plan file, the
condition and the values that falsify it, e.g.\ \texttt{(>= (battery r1) 10): 5 >= 10 is
false}.

\section{Implementation and Platform}
\label{sec:implementation}

\paragraph{Software.} \sys\ is about 36\,000 lines of C++20, plus 19\,000 of tests, under the
Apache~2.0 license. It has no dependencies beyond the C++ standard library and threads, and in
particular it does not depend on ROS: it builds with standard CMake (\texttt{cmake -S . -B build}
and \texttt{cmake --build build}; CMake 3.22 or later, and a C++20 compiler such as GCC~11 or
later) and installs a CMake package, so that another project links it with
\texttt{find\_package(kairos)}. Each stage of Figure~\ref{fig:pipeline} is an internal library;
the public library exports only an API that is stable within a major version.

\paragraph{A library and a program.} \sys\ is meant to be used in two ways. As a \emph{library},
a program plans in its own thread (\texttt{Planner::solve}) or starts an anytime search in a
thread of the library (\texttt{Planner::start}) and receives each better plan through a
callback, waits for the first one, or stops the search; no exception crosses the API, and every
answer carries its status and diagnostics. This is how a robot's executive embeds it, without
the cost of starting a process or writing files. As a \emph{program}, the command
\texttt{kairos DOMAIN PROBLEM} writes plans in IPC, POPF or JSON format, with an exit code for
each kind of answer, and \texttt{kairos validate} and \texttt{kairos check} validate plans and
PDDL on their own; this is how it runs in planning pipelines and in the experiments of this
article.

\paragraph{Integration.} \sys\ is also distributed as ROS~2, conda, Python and planutils
packages, which contain the same library and program, and it is the default planner of PlanSys2.
The source code, the benchmark, the scripts that produce every table and figure of this
article, and the raw results are public (Appendix~\ref{app:resources}).

\paragraph{Testing.} The test suite has 979 tests: unit tests of every stage, property-based
tests of the heuristics on fully explored state spaces (admissibility where claimed, no false
dead ends), equivalence of the normalised and the original tasks on random walks, and
end-to-end planning with validation. Continuous integration runs them under address,
undefined-behaviour and thread sanitizers, with fuzzing of the parser, inside a 512\,MB memory
cgroup, with a line coverage of at least 90\%, and builds the ROS~2 packages for every target.

\paragraph{Experimental platform.} The experiments ran on an Intel Core Ultra 9 285H (6
performance cores up to 5.4\,GHz, 8 efficient cores up to 4.5\,GHz, 2 low-power cores;
24\,MiB L3) with 60\,GB of RAM, under Ubuntu 26.04 with Linux 7.1. \sys\ was built with GCC 15.2
(\texttt{-O3}, release). The other planners run in containers: planutils 0.10.25 (OPTIC, TFD,
POPF, LPG-td, Metric-FF, ENHSP-2020, LAMA, SMTPlan+, VAL) and builds from source at pinned
commits (the IPC~2023 classical winners Scorpion Maidu, Levitron, Fast Downward Stone Soup 2023
and DecStar with their agile configurations, lama-first, Dual-BFWS, LNM-Plan, the current ENHSP,
NextFLAP, Powerlifted), and TAMER and Aries through Unified Planning. Commits and container
recipes are in the online appendix.

\section{Benchmark}
\label{sec:benchmark}

The benchmark is a repository of its own. Each suite is a list of files of public collections
pinned by commit and SHA-256, with a script that downloads exactly them (Table~\ref{tab:suites}).

\begin{table}[t]
\centering\small
\begin{tabular}{@{}lrl@{}}
\toprule
Suite & Tasks & Content \\
\midrule
Classical & 220 & IPC 1998--2023, 10 per domain \\
Numeric & 420 & IPC 2002, 2023, 2026, 10 per domain \\
Temporal & 225 & IPC 2002--2014, 5 per domain \\
Concurrency & 55 & required concurrency; temporal numeric \\
PDDL3 & 125 & IPC 2006 preferences and constraints \\
PDDL+ & 255 & hybrid domains; SMTPlan benchmarks \\
Durations & 25 & durations strictly between bounds (new) \\
Hard to ground & 287 & lifted planning benchmarks \\
PlanSys2 at scale & 144 & 8 robot domains, 10--20\,000 objects (new) \\
Per feature & 450 & 28 features, with unsolvable tasks (new) \\
Probes & 26 & one feature each, for Table~\ref{tab:support} (new) \\
\bottomrule
\end{tabular}
\caption{Benchmark suites: 2\,232 tasks in three groups. The nine comparative suites (1\,756
tasks, from Classical to PlanSys2 at scale) are run by every planner that reads them
(Tables~\ref{tab:coverage} and~\ref{tab:stats}); the feature bank (450 tasks) checks \sys's
answers per feature, including the tasks without a plan of Section~\ref{sec:q5}, and is not
compared; the 26 probes serve only Table~\ref{tab:support}.}
\label{tab:suites}
\end{table}

\paragraph{New suites.} \emph{PlanSys2 at scale} takes the eight domains shipped with PlanSys2,
unchanged, and generates problems at six sizes from about 10 to 20\,000 objects, three seeds
each. The generator builds each problem together with a reference plan, by a scripted walk of
each agent through the problem (no planner is involved), and the plan is validated, so every
problem is solvable by construction. The reference plans only certify solvability: they are
neither used by any planner nor used to select problems, and the sizes and seeds are fixed in
advance, so no problem was kept or dropped according to how a planner did on it. The \emph{feature bank} has, for each of the 28 features the planner lists, up to six
families of public problems that use it, five tasks each from easy to hard, and new families
where none exists (object fluents, timed initial fluents, a robotic warehouse with up to 400
robots). The feature bank is not one of the comparative suites: lama-first, OPTIC, POPF and ENHSP also
ran it, but they reject or fail on between 56\% and 87\% of its tasks, since most of its features are
outside what they read, so a count of solved tasks would compare coverage of the language
rather than performance. It serves to check \sys's answers per feature, in particular on the
tasks without a plan (Section~\ref{sec:q5}). \emph{Unsolvable instances} are built so that their unsolvability has a short argument
(a resource that cannot reach a threshold, a deadline before the earliest arrival, a goal
behind a fact no action adds), which serves as their oracle; Section~\ref{sec:q5} reports the
answers on them.

\paragraph{Probes.} The 26 probes cover the 17 features listed in Table~\ref{tab:support}: one task per feature (17), plus a durative-action variant for each of the nine non-temporal features (9), to accommodate planners that only
support durative actions. The probes are trivial tasks following the style of IPC domains: they are typed, use parameterized actions, and contain a fixed fact in the initial state and a single fact in the goal.

\section{Experimental Evaluation}
\label{sec:experiments}

We answer five questions. \textbf{Q1}: Does \sys\ match or beat the best planner of each
fragment in coverage, at short and long limits? \textbf{Q2}: Which features does each planner
support in practice? \textbf{Q3}: What does each component and the orchestration contribute?
\textbf{Q4}: Does the validator agree with VAL, and how fast is it? \textbf{Q5}: Are the answers
on unsolvable tasks correct?

\subsection{Method}
\label{sec:method}

The platform and the versions of all planners are those of Section~\ref{sec:implementation}.

\paragraph{Limits and protocol.} 30\,s of wall time and 3\,GB of memory per task, enforced by a
cgroup (PlanSys2 at scale: 60\,s and 2\,GB; results with different limits are never mixed).
Each run is limited to one CPU; eight runs proceed at a time, and runs are not pinned to a
core, so on this hybrid processor a run may be scheduled on a slower core. This introduces
timing variability that may affect planners differently, depending on how the operating system
schedules their runs; it is the main reason why we compare at three limits and over domains
rather than on single times. Each configuration ran once per task. \sys\ runs with one
thread (the command uses up to four by default).

\paragraph{Clock.} Times are wall times from the start of the planner's process to its first
plan (for the 100\,ms and 1\,s columns) or to the end of the run, measured inside the container,
so that the container's start is not counted, and \emph{not corrected}: the cold start of the
Apptainer images in which planutils packages POPF, SMTPlan+ and ENHSP-2020 (0.08--0.19\,s on a
one-action task) is counted, and the counts without it are given in parentheses in
Table~\ref{tab:fullcoverage}. Process start, parsing and grounding count for every planner.
For planners that stop at their first plan (the agile configurations, lama-first, POPF, ENHSP,
Metric-FF, LNM-Plan, DecStar, Dual-BFWS, SMTPlan+, Powerlifted, TAMER) this is their first-plan
time; anytime planners that do not report it get ``--'' in the short columns. \sys\ reports
the time of its first validated plan, leaving out its process start (about 5\,ms), since this overhead is not incurred when \sys\ is used as a library by another program.

\paragraph{Validation and quality.} Every plan of every planner is checked by \sys's validator,
and an invalid plan counts as unsolved (Section~\ref{sec:val-agreement} checks this validator
against VAL). Quality is the IPC score at the limit: for each task, the best cost among the
planners that solved it divided by the planner's own cost.

\paragraph{Statistics.} Tasks of one domain are correlated, so we resample domains, not tasks:
for the difference in solved tasks between \sys\ and the best other planner of each suite, a
95\% bootstrap interval (10\,000 resamples) and a two-sided sign test over the domains where
they differ. These account for the variation across domains, not for run-to-run timing
variability, since each configuration ran once per task; we report them as descriptive
comparisons per suite, not as a global test of superiority.

\paragraph{Versions.} \sys\ is release 1.0.6. The other planners ran on the same machine, with
the same limits and the same scripts.

\subsection{Q1: Coverage and latency}

\begin{table}[t]
\centering\small
\setlength{\tabcolsep}{4pt}
\begin{tabular}{@{}llrrr@{}}
\toprule
Suite (tasks) & Planner & 100\,ms & 1\,s & limit \\
\midrule
Classical (220) & \sys & \textbf{76} & \textbf{111} & \textbf{166} \\
 & LAMA (anytime) & -- & -- & 160 \\
 & lama-first & 0 & 75 & 159 \\
 & Dual-BFWS & 57 & 88 & 133 \\
\midrule
Numeric (420) & \sys & \textbf{168} & \textbf{216} & \textbf{299} \\
 & ENHSP & 0 & 128 & 208 \\
 & LNM-Plan & 22 & 69 & 117 \\
\midrule
Temporal (225) & \sys & \textbf{102} & \textbf{147} & \textbf{180} \\
 & LPG-td & -- & -- & 125 \\
 & POPF & 0 & 82 & 104 \\
\midrule
Concurrency (55) & \sys & \textbf{25} & \textbf{32} & \textbf{33} \\
 & POPF & 0 & 18 & 24 \\
 & TFD & -- & -- & 24 \\
\midrule
PDDL3 (125) & \sys & \textbf{59} & \textbf{71} & \textbf{86} \\
 & OPTIC & -- & -- & 45 \\
\midrule
PDDL+ (255) & \sys & \textbf{55} & \textbf{94} & \textbf{148} \\
 & SMTPlan+ & 0 & 31 & 35 \\
 & ENHSP & 0 & 18 & 23 \\
\midrule
Durations (25) & \sys & \textbf{24} & \textbf{24} & \textbf{25} \\
 & POPF & 0 & 17 & 17 \\
\midrule
Hard to ground (287) & \sys & \textbf{48} & \textbf{156} & \textbf{215} \\
 & Powerlifted & 22 & 106 & 190 \\
\midrule
PlanSys2 at scale (144) & \sys & \textbf{60} & \textbf{77} & \textbf{92} \\
 & TFD & -- & -- & 58 \\
 & POPF & 0 & 45 & 57 \\
\bottomrule
\end{tabular}
\caption{Tasks solved with a valid plan within 100\,ms, 1\,s and the limit, with uncorrected
times; per suite, \sys\ and the best other planners. Every planner, with the counts without the
start-up of Apptainer, is in Table~\ref{tab:fullcoverage}.}
\label{tab:coverage}
\end{table}

\begin{figure}[t]
\centering
\includegraphics[width=\linewidth]{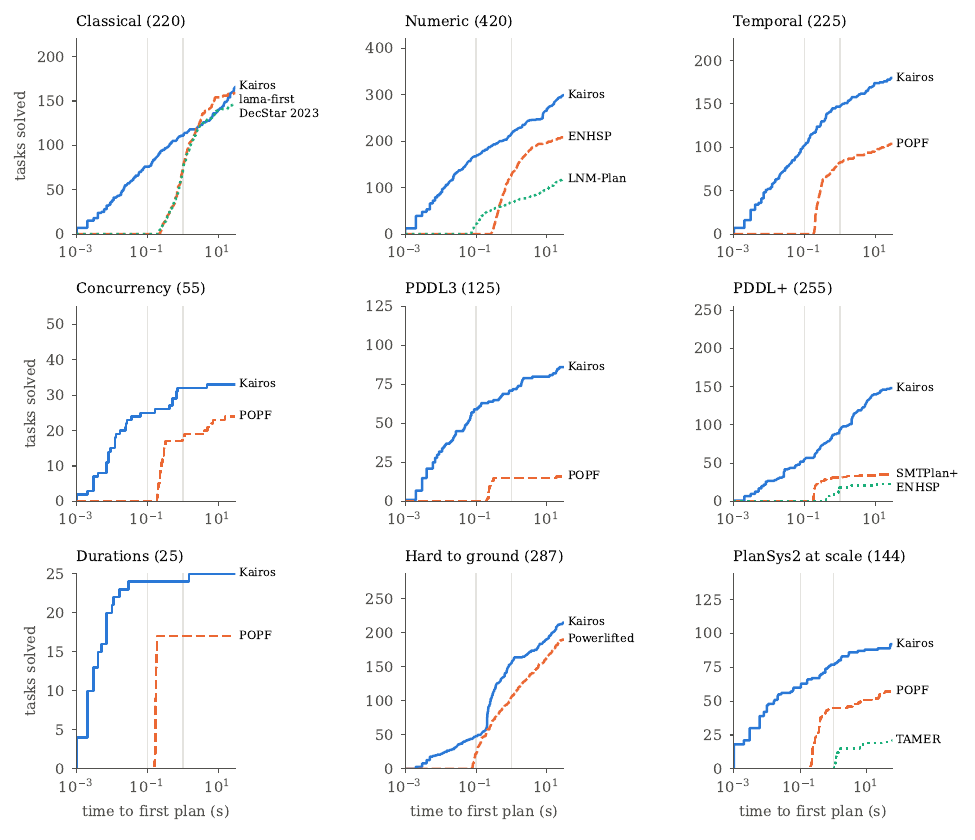}
\caption{Tasks solved against time to the first plan (log scale), per suite: \sys\ and the two
best other planners that report their first plan. Grey lines at 100\,ms and 1\,s.}
\label{fig:cactus}
\end{figure}

Table~\ref{tab:coverage} and Figure~\ref{fig:cactus} summarise the results. At the limit,
\sys\ solves the most tasks on every suite; at 100\,ms and 1\,s it solves the most among the
planners that report a first-plan time (anytime planners that do not report it, such as LAMA,
LPG-td, TFD and OPTIC, have no entry there). At the limit, the differences are smallest on
classical tasks (166 against LAMA's 160) and largest on the numeric (299 against ENHSP's 208),
PDDL3, PDDL+ and PlanSys2 suites. At short limits, within 100\,ms no planner built on Fast
Downward's translator solves any classical task, while \sys\ solves 76 of 220, and 168 of the
420 numeric ones. The curves show why: the other planners start to solve tasks after
0.1--0.3\,s, the time of their front ends, while \sys\ solves its first tasks in about a
millisecond.

The short limits must be read with care. The 100\,ms results measure end-to-end performance
under the protocol of Section~\ref{sec:method}, not isolated algorithmic speed: they include
the front end of each planner, the cold start of the Apptainer images of three planners
(0.08--0.19\,s, which the parenthesised counts of Table~\ref{tab:fullcoverage} remove), and they
leave out the 5\,ms of \sys's own process start. On the 1\,s and full-limit columns these differences
are small relative to the limit, and the conclusions of this section rest mainly on them; the
100\,ms column shows what an application that embeds the library obtains, which is the use
\sys\ is designed for.

\begin{table}[t]
\centering\small
\begin{tabular}{@{}llrrrr@{}}
\toprule
Suite & Best other & $\Delta$ solved & 95\% interval & domains $+$ / $-$ & sign test $p$ \\
\midrule
\rows{stats.tex}
\bottomrule
\end{tabular}
\caption{Difference in tasks solved at the limit between \sys\ and the best other planner,
with a bootstrap interval over domains, the domains where each solves more, and a two-sided
sign test over those domains.}
\label{tab:stats}
\end{table}

Table~\ref{tab:stats} accounts for the correlation within domains.
The gains are significant ($p < 0.01$, interval above zero) on the numeric, temporal, PDDL+,
hard-to-ground and PlanSys2 suites, and the interval is above zero on PDDL3 ($p = 0.08$). Against LAMA on
classical tasks, the gain at 30\,s ($+6$, $[-7, +17]$) is within the noise; there, the advantage
lies in latency: at 1\,s, 111 tasks against lama-first's 75. On the concurrency and durations
suites, with eleven and five domains, the sign test cannot reach significance.

\begin{table}[t]
\centering\small
\begin{tabular}{@{}lrrrr@{}}
\toprule
Suite & solved & p50 (s) & p90 (s) & p99 (s) \\
\midrule
\rows{latency.tex}
\bottomrule
\end{tabular}
\caption{Time to \sys's first plan over the tasks it solves: median and 90th and 99th
percentiles.}
\label{tab:latency}
\end{table}

As for \sys's own latency, Table~\ref{tab:latency} gives the distribution of its first-plan times.
The median is below 0.15\,s on every suite but PDDL+ (0.31\,s) and hard to ground (0.28\,s),
and below 15\,ms on the concurrency, durations and PlanSys2 suites.

Plan quality can be compared across planners only where all of them report the same
metric, the classical and numeric suites (Table~\ref{tab:quality}). \sys\ has the highest IPC
score on both, marginally on classical tasks (154.5 against LAMA's 152.5) and widely on numeric
ones (292.7 against ENHSP's 166.5). Per solved task, LAMA's classical plans are slightly better
(0.95 against 0.93 on average), and \sys's numeric plans better than ENHSP's (0.98 against
0.80): the gain in score on classical tasks comes from solving more, not from better plans.

\begin{table}[t]
\centering\small
\begin{tabular}{@{}llrr@{}}
\toprule
Suite & Planner & IPC score & tasks with a cost \\
\midrule
\rows{quality.tex}
\bottomrule
\end{tabular}
\caption{Quality at the limit (IPC score) on the suites where all planners report the problem's
metric.}
\label{tab:quality}
\end{table}

\paragraph{Where \sys\ loses.} Another planner solves more tasks of a domain in 34 of the 291
domains (Table~\ref{tab:losses} in the appendix). The largest gaps are in rubiks-cube and parking
(LAMA), counters, ext-plant-watering, sailing-wind and ztalloc (ENHSP), floor-tile and storage
(LPG-td), turn-and-open (TFD), storage with preferences (OPTIC) and chasing cars (ENHSP).

\subsection{Q2: Support}
\label{sec:support}

Table~\ref{tab:support} (Section~\ref{sec:related}) answers Q2. Under the operational criterion
of the probes, \sys\ handles all 17 features, and no other planner handles more than eight. The doubtful cells are of three kinds, each checked by
hand on the planner's output: crashes on a feature the planner declares (OPTIC and POPF on
disjunctive preconditions; OPTIC on continuous effects), invalid plans (OPTIC and POPF start
both actions of a \texttt{sometime-before} constraint at the same instant; NextFLAP starts an
action at the instant a timed literal makes its condition true; TFD ignores the start condition
of a conditional effect), and features solved only inside another task (NextFLAP's durative
actions). Uncaught exceptions with which NextFLAP rejects an input count as rejections.

A probe shows that a planner reads a feature and handles it correctly on one small task; it
does not show that every use of the feature, or every combination with others, is supported.
Combinations are exercised by the other suites (the feature bank mixes, for instance, object
fluents with durative actions and numeric resources with timed initial fluents, and the
PlanSys2 suite mixes durative actions, numeric resources and negative preconditions), where
every plan is validated; Table~\ref{tab:support} should be read as a lower bound on what each
planner cannot do, not as a certificate of full support.

\subsection{Q3: Ablation}
\label{sec:q3}

\begin{table}[t]
\centering\small
\begin{tabular}{@{}llrrr@{}}
\toprule
Suite & Variant & 100\,ms & 1\,s & limit \\
\midrule
\rows{ablation.tex}
\bottomrule
\end{tabular}
\caption{Ablation: \sys\ against variants that run one engine or one search configuration
alone (\texttt{--engine}, \texttt{--search}), same limits and protocol.}
\label{tab:ablation}
\end{table}

Table~\ref{tab:ablation} runs \sys\ against variants with one engine or one configuration
alone. \emph{Compressed search is what makes the temporal suites fast}: M2 alone solves almost
what the full planner solves on the temporal suite (179 against 180), while M3 alone, the
POPF-like engine, solves 124, and 49 instead of 102 within 100\,ms. On the concurrency suite
the profile sends the tasks to M3, and M3 alone matches the full planner, while M2 alone solves
5. These rows also isolate the profile: M3 alone is what a classification that sends every
temporal task to M3 gives (a check by variable does so on all of PlanSys2's domains,
Section~\ref{sec:profile}), and M2 alone what one that sends every task to M2 gives; the
profile keeps the speed of the first on the temporal suite and the coverage of the second on
the concurrency suite. \emph{The novelty fallback} adds 9 classical tasks (157 to 166) and 27 numeric ones (272 to
299) at the limit; plain lazy FF solves 139 and 265. On the numeric suite, lazy FF alone is a
little faster at short limits (172 and 220 against 168 and 216): there, the breadth of the
portfolio costs some latency. \emph{The incremental goals} matter for latency on PlanSys2 at
scale (77 against 66 tasks within 1\,s, 60 against 55 within 100\,ms), not for coverage at the
limit (92 against 94).

One component cannot be switched off from the command, and we report it separately because
its evidence is of a different kind: when projecting unread numeric variables was added during
development, the classical suite went from 117 to 140 tasks and the numeric suite from 253 to
289. These are measurements of two development versions that differ in that change, not a
controlled ablation of the released planner, and other components of those versions differ
from it.

\subsection{Q4: Agreement with VAL}
\label{sec:val-agreement}

We ran \sys's validator and VAL on the last plan of every run of \sys\ above
(1\,567 plans, all of them valid for \sys's validator) and on its broken variants: a step
removed, when the plan has more than one step (1\,514), and a timed step moved one unit earlier,
when some step is timed at 1 or later (506); 3\,587 plans in all. VAL decides 3\,045 of them. It
cannot decide the other 542: on 288 it stops with a parse or type-checking error (object
fluents, PDDL+ dialects, some IPC~2023 numeric domains), on 42 with ``validator failed'' or
``not handled'' on dynamics it does not support, and 212 are PDDL+ plans whose processes run on
after the last action, which VAL's plan format, where a plan ends at its last happening, cannot
express. On the 3\,045 the two validators agree on 3\,024 verdicts. Each of the 21
disagreements was checked by hand; the instances and plans are in the online appendix:
\begin{itemize}
\item 13 are the PDDL+ convention of Section~\ref{sec:validator}: plans of SMTPlan's car domain,
in the PDDL+ suite and in the feature bank, in which two \texttt{decelerate} steps share an
instant. Each reads and decreases the acceleration, so VAL rejects them as mutex; applied in
order, as ENHSP writes plans, they are valid. These are differences of convention, not errors
of either validator.
\item 5 are broken variants that VAL accepts although a step's strict numeric precondition is
false: in four variants of petri-net (IPC~2026) and one of sugar (IPC~2023) with a step removed,
VAL applies a later step whose precondition \texttt{(> (value ?p1) 0)} (petri-net) or
\texttt{(> (in-storage ?l ?b) 0)} (sugar) is $0 > 0$.
\item 3 are plans that VAL rejects with trajectory constraints of tpp (IPC~2006, instances 13
and 15, the latter also in a variant with a step moved): VAL reports a
\texttt{sometime-before} constraint whose antecedent \texttt{(> (stored ?g) 0)} it takes as
satisfied at time 0, where the problem sets \texttt{(stored ?g)} to 0 and the plans first store
those goods at times 2\,699 (instance 13) and 8\,053 (instance 15).
\end{itemize}
Run as commands, \sys's validator takes 1.4\,ms per plan in the median (99th percentile
71\,ms) and VAL 3.0\,ms (146\,ms).

On the 51 plans of the probes (the 26 plans, 20 variants with a step removed and 5 with a step
moved), VAL cannot decide five: it rejects the four with object fluents as a syntax error and
stops on non-linear change. The two validators agree on 45 of the other 46. The exception is the
PDDL+ convention again: \sys's plan switches a pump off and checks the tank at the same instant,
valid when steps apply in order (ENHSP's reading) and invalid when both are evaluated on the
state before them (VAL's). In development, \sys's validator also agreed with VAL on all 515
plans of POPF on PlanSys2 and IPC temporal problems and their broken variants.

\subsection{Q5: Tasks without a plan}
\label{sec:q5}

\begin{table}[t]
\centering\small
\begin{tabular}{@{}lccccc@{}}
\toprule
Task (family) & \sys & lama-first & OPTIC & POPF & ENHSP \\
\midrule
\rows{unsolvable.tex}
\bottomrule
\end{tabular}
\caption{Answers on the eleven generated tasks of the feature bank that have no plan. ``error'':
the planner rejected the task (none of them supports object fluents, timed initial fluents or
the warehouse's features).}
\label{tab:unsolvable}
\end{table}

Table~\ref{tab:unsolvable} shows the answers on the eleven tasks of the feature bank built
without a plan. \sys\ answers \emph{unsolvable} on ten of them: four by grounding, three by
the exhaustion of M1 (Proposition~\ref{prop:m1}) and three by the task without times
(Proposition~\ref{prop:untimed}); on the eleventh, a timed-initial-fluent task, M3 and the task
without times prove nothing and the answer is \emph{no plan found}. The other planners reject
these tasks. Outside the feature bank, \sys\ answered \emph{unsolvable} once more, by grounding,
on a settlers task of IPC~2023 whose goal has a fact that no action adds: eleven
\emph{unsolvable} answers in all, each with a short argument for the absence of a plan.
Over the whole benchmark, no planner gave
a wrong answer in the feature bank (a plan for a task without one, or \emph{unsolvable} for one
with a plan), and \sys\ never answered \emph{unsolvable} on a task that another planner
solved.\footnote{One development version did, on a PDDL3 task with \texttt{within} constraints
that OPTIC solves; the released planner answers \emph{no plan found} there, since M3's
exhaustion is not a proof.}

\section{Case Studies}
\label{sec:cases}

\paragraph{Required concurrency.} A fuse must be mended in the light of a match: the match
gives light only while it burns (5\,s), and mending (2\,s) needs light over all its duration.
The profile finds the transient value and sends the task to M3:
\begin{verbatim}
[orchestrator] engine M3: (light-match m1) sets (light) only while it runs,
               and other actions can use that
[orchestrator] engine M3: 3 durative actions as snap actions, 0 timed initial literals
0.000: (light-match m1) [5.000]
0.000: (mend-fuse f1) [2.000]
kairos: parse 0.1 ms, normalize 0.1 ms, ground 0.1 ms, sas 0.1 ms, search 1.0 ms,
        total 1.5 ms, first plan 0.9 ms
\end{verbatim}
Mending starts with the match: its over-all condition holds on the open interval
$(0, 2)$, after the match's start effect. With a 6\,s fuse the task has no plan, and the answer
is the one at the bottom of Figure~\ref{fig:why}.

\paragraph{Processes and events.} A pump fills a tank at 1 unit/s while it runs; an event
spills it at level 10; the goal is a full tank (level $\ge 5$) without spilling. The profile
sends the task to the time-step engine:
\begin{verbatim}
[orchestrator] PDDL+: 1 processes and 1 events, steps of 0.1 s
[orchestrator] search: lama-novelty, lama, then both with long waits (auto)
0.000: (pump-on r1) [0.000]
5.000: (pump-off r1) [0.000]
5.000: (check r1) [0.000]
5.000: @PlanEND
kairos: ... total 2.2 ms, first plan 1.1 ms
\end{verbatim}
A long wait crosses the five seconds in one step, stopping just before the goal's condition
becomes true. The validator integrates the process in continuous time and checks that the
event never fires.

\section{Limitations and Scope of the Conclusions}
\label{sec:limits}

\paragraph{What the guarantees cover.} On temporal tasks, \sys\ proves unsolvability only by grounding or by the task
without times (Proposition~\ref{prop:untimed}); M2, M3 and the time-step engine are incomplete,
and their failure is reported as \emph{no plan found}. M3's memoisation by logical state can
discard a partial plan with a better temporal profile, which also limits makespan improvement.
The proofs of Propositions~\ref{prop:m1} and~\ref{prop:optimal} rest on properties of the
implementation (complete successor generation, an over-approximating relaxation, exact
duplicate detection) that are tested, by property-based tests on fully explored spaces, but not
verified formally; the error found in the non-linear relaxation shows that such tests can miss
a case.

\paragraph{What validity means.} A validated plan is valid with respect to the PDDL model under
the semantics of Table~\ref{tab:validity}. For non-linear dynamics the validation is numerical,
not certified, and the sensitivity of its verdicts to the integration step has not been
measured. For PDDL+, \sys\ adopts ENHSP's convention for steps at the same instant, which VAL
does not share (Section~\ref{sec:val-agreement}); the plans are valid under that convention.
None of this says that a plan can be executed by a physical system.

\paragraph{What the experiments support.} The comparison covers nine suites, the planners and
versions of Section~\ref{sec:implementation}, one run per task, and a hybrid processor without
pinning to cores. Comparisons at three limits and over domains reduce, but do not remove, the
noise this adds. The 100\,ms results include asymmetries in start-up accounting
(Section~\ref{sec:experiments}) and should be read as end-to-end performance under this
protocol. The orchestration shares were selected by hand during development, using public
suites that also appear in the evaluation. Although the policy uses few coarse parameters and
no domain-specific tuning, this introduces a potential development-set bias, and the robustness
of these settings on an independent collection of domains has not been established; neither
has their sensitivity been measured. Feature support is operational: a
probe shows that one small task with the feature is handled, not that the feature is supported
in every combination. Finally, the relaxations ignore resource consumption in some cases (a
vehicle without fuel far from a station), and grounding memory limits tasks such as assembly
with many interchangeable parts.

\section{Conclusion}

We set out to test three principles on a planner for the PDDL family. \emph{Semantic
applicability precedes algorithm selection}: deciding from the task's semantics which engines
apply lets compressed temporal search run on most robotic domains, which a check by variable
would send to the slower snap-action search, while tasks that need overlap still reach the
snap-action engine; the ablation shows that the profile keeps the speed of the first and the
coverage of the second. \emph{Soundness, completeness and performance are separate properties}:
deriving the meaning of each answer from the engine that produced it gives three kinds of
answer, a validated plan, a proof of unsolvability and an inconclusive search, and the
counterexample of Example~\ref{ex:lift} shows why the distinction must be made per engine and
not assumed; every returned plan is validated against the parsed PDDL, independently of the
planner's transformations, although for non-linear dynamics the validation remains numerical
rather than certified. \emph{Heterogeneity need not cost responsiveness}: on the nine comparative suites,
under our protocol, \sys\ solves more tasks than the strongest evaluated planner of each
suite at the full limits (30\,s, and 60\,s for PlanSys2 at scale), and the most among the
planners that report a first plan at 100\,ms and 1\,s;
on classical planning the gain over LAMA at 30\,s is not significant. These results are limited
to the suites, planners, versions and protocol evaluated (Section~\ref{sec:limits}). Future work
includes a temporal state identity that keeps
temporal profiles, numeric and disjunctive landmarks, certified validation of non-linear
dynamics, a sensitivity study of the orchestration shares, an engine for Unified Planning, and
the use of \sys\ as both planner and critic in loops where a language model writes or repairs
PDDL models.

\begin{acks}
This work was supported by the Spanish Ministry of Science, Innovation and Universities and the
State Research Agency (AEI) under the Knowledge Generation Projects program, through the
projects CONECTALO (2025 call, PID2025-171682OB-C32) and AURORAS (2024 call,
PID2024-161761OB-C22  funded by MICIU/AEI/10.13039/501100011033 and by ERDF/EU), and by the Community of Madrid through the project iRoboCity2030-CM
(TEC-2024/TEC-62).
\end{acks}

\printbibliography

\appendix

\section{Proofs}
\label{app:proofs}
The proofs concern the complete search configurations of Proposition~\ref{prop:m1}; the
incomplete phases (incremental goals, M2, M3, time steps) never report their failure as a
proof, so no proof is needed for them.

\paragraph{Proposition~\ref{prop:m1}.} Let $S_0$ be the initial state, $\mathit{succ}(s)$ the
states reached from $s$ by one applicable action, and $\sim$ the equivalence that ignores the
values of the projected numeric variables $P$: those that no precondition, effect value,
duration, axiom or goal reads, and that are defined in $S_0$.

\emph{(a) $\sim$ is a bisimulation for applicability and goals.} If $s \sim s'$, the two states
agree on every variable that a precondition, an axiom or the goal reads, so the same actions
are applicable, the same axioms fire and both or neither are goals. For an applicable $a$, its
effects on variables outside $P$ compute their values from variables outside $P$ (effects on
$P$ are never read), so $\mathit{succ}_a(s) \sim \mathit{succ}_a(s')$. A variable undefined in
$S_0$ is excluded from $P$ because an effect that changes an undefined fluent makes an action
inapplicable, so definedness is read.

\emph{(b) The search visits every reachable class.} The registry stores one representative per
class (it resets $P$ to its initial values). An expanded state generates all its applicable
actions (the successor generator is a decision tree over all preconditions, followed by the
checks of negations, numeric conditions and object comparisons). A generated state is
discarded only if its class was seen before, or if a heuristic reports it as a dead end. The
heuristics report a dead end only when the goal is unreachable in their relaxation (delete
relaxation with numeric subgoaling, Appendix~\ref{app:heuristics}), which over-approximates
reachability: a fact once reached stays reached; a linear condition is reached through any
constant change that moves it the right way, and through any state-dependent change or any
change of a variable of a non-linear condition at least once; the caps on repetitions bound the
cost of reaching a condition, never whether it is reached. So if the relaxation cannot reach
the goal from $s$, every condition left unreached is a fact that no action adds or a linear
condition that every change moves the wrong way by a constant, and no plan from $s$ exists. Hence, when the open list is empty,
every class reachable from $S_0$ through states that are not dead ends has been expanded.

\emph{(c) Conclusion.} If a plan existed, its states would form a path of classes from $S_0$ to
a goal class, none of which is a dead end; by (b) all of them would have been generated and the
goal class tested, and the search would have returned the plan. So exhaustion proves that no
plan exists. The lifted engine satisfies (b) in the same way, with two
differences. Its successors are all the groundings that satisfy the preconditions, computed by
joins. Its dead ends come from a relaxed exploration that stops when a layer adds no fact; when
that exploration exceeds its caps it falls back to the goal count, which reports no dead ends,
and when the budget runs out the search stops and its end is not taken as exhaustion. And
duplicates are detected by a 128-bit key of the state (two independent 64-bit hashes of its
atoms) without comparing the states, so the proof holds up to a collision of the key, whose
probability for $n$ states is below $n^2 / 2^{129}$ (below $10^{-22}$ for $10^8$ states).
\hfill$\square$

\paragraph{Proposition~\ref{prop:m2sound}.} Given in the main text. The scheduler is
not needed for soundness: the sequential schedule is valid on its own, and the earliest
schedule is kept only when the validator accepts it.

\paragraph{Proposition~\ref{prop:untimed}.} Given in the main text. The relaxation needs that
no durative action overlaps itself, as in POPF, since the snap task runs one instance of each at
a time; with continuous effects, processes, events or effects that depend on a non-fixed
duration it is not a relaxation, and \sys\ does not use it.

\section{Heuristics and Search of M1}
\label{app:heuristics}

The components of M1 come from FF, LAMA, Fast Downward and BFWS; this appendix states what \sys\
computes, so that its behaviour can be reproduced, and marks what differs from those planners.
A ground task is $\Pi = \langle V, N, A, s_0, G, c\rangle$: finite-domain variables $V$,
numeric variables $N$, actions $A$ with conditions on $V \cup N$ and effects that assign values
of $V$ and increase, decrease or assign numeric expressions, the initial state $s_0$, the goal
$G$ and non-negative costs $c$.

\paragraph{Relaxed exploration and $h^{\mathrm{FF}}$.} Facts are relaxed as in FF (an achieved
value is never lost). A linear numeric condition $e \ge 0$ (or $e > 0$) is a node of the
exploration, reached through any action that changes $e$ by $\Delta > 0$ in the current state,
counted $k = \lceil \mathit{gap}/\Delta \rceil$ times (or $\lfloor \mathit{gap}/\Delta \rfloor
+ 1$ for a strict condition), with additive cost $c(a) \cdot k$: this is numeric subgoaling in
the style of Metric-FF and ENHSP, with $\Delta$ evaluated in the state being evaluated. When
$\Delta$ depends on the state and does not help there, the action counts once. The relaxed plan
is extracted backwards from the goal, and $h^{\mathrm{FF}}(s) = \sum_a c(a) \cdot t(a)$, where
$t(a)$ is the largest number of repetitions that any condition of the relaxed plan needs from
$a$.

\paragraph{When the relaxation reports a dead end.} A state is a dead end when the goal node is
not reached. The exploration is an over-approximation of reachability: a fact, once reached,
stays reached; a linear condition is reached by any contributor whose constant change moves it
the right way; a contributor whose change depends on the state (a non-constant expression, a
scaling, an undefined value), and any contributor that changes a variable of a non-linear
condition, reaches that condition at least once, whatever the sign of its change in the current
state; a clock action that moves nothing now counts $10^6$ times, which is far but finite;
inequalities $\ne$, object comparisons and derived atoms read through numbers count as
satisfied. The caps bound costs, never reachability: when a condition would need more than
$10^9$ repetitions of a contributor (an increment of 1 towards $2 \cdot 10^9$), the condition
is reached with the cost of $10^9$ repetitions, which underestimates the distance but keeps the
state alive; the cap can make the heuristic less informed, never report a dead end. The only conditions that can
stay unreached are therefore linear ones that every contributor moves the wrong way by a
constant, or facts that no action adds, and both are unreachable in the task. Non-linear
conditions are linearised at each state as their tangent (gradient by finite differences) to
count repetitions. A tangent can have zero gradient (\texttt{x * y >= 4} at the origin) and
show no contributor helping; it is the rule above, that any contributor that changes a variable
of a non-linear condition reaches it at least once, that keeps such a condition reachable and
prevents a false dead end.

\paragraph{Landmarks.} Fact landmarks are found backwards from the goal as in LAMA's RHW
generation, without disjunctive landmarks: the first achievers of a landmark are those
reachable in the relaxation without it, and the facts shared by the conditions of all of them
are landmarks before it. Orderings are greedy-necessary, plus reasonable orderings detected by
the interference test of Fast Downward; reasonable orderings that form a cycle are dropped.
Numeric conditions are crossed but never become landmarks. The landmark heuristic is
path-dependent: a landmark is reached once it holds in some state of the path (the intersection
over the paths that reach a state), and is \emph{future} if it was not reached, or was reached
but is false now and is a goal or greedy-necessarily before an unreached landmark, or was
reached before one of its reasonable parents. $h^{\mathrm{LM}}(s)$ is the sum, over future
landmarks, of the cost of their cheapest achiever (the count with unit costs).

\paragraph{Novelty.} The features of a state are its facts, one per finite-domain variable, and
one per numeric variable holding its exact value (undefined is one more value), as in Panino.
States are partitioned by $\langle \#g(s), h^{\mathrm{LM}}(s) \rangle$, where $\#g$ is the number
of unachieved goals (by $\#g$ alone when there are no landmarks), as in BFWS. The width $w(s)$ is
1 if some feature of $s$ was never seen in its partition, 2 if some pair was never seen, and 3
otherwise. With more than 128 features only pairs that contain a feature changed from the
parent are checked, as in BFWS; seen tuples are kept as 64-bit hashes, and above 64\,MB of
pairs the width falls back to $\{1, 3\}$. The tables are never reset within a search.
Differences from BFWS(f5): novelty is one more queue of the alternation below, ordered by
$\langle w, h^{\mathrm{FF}}\rangle$, not a lexicographic key over all queues, and numeric
values are features without discretisation.

\paragraph{Search and open lists.} The first plan is searched by greedy best-first search, lazy
(deferred evaluation, one entry per operator with the parent's value) or eager. Each heuristic
has one queue with every entry and one with the entries reached by preferred operators (the
union of the preferred operators of all heuristics: the actions of the relaxed plan applicable
in the state, and those that achieve a future landmark). The queue with the lowest priority
counter is popped and its counter increased by 1; when some heuristic improves on its best
value, the preferred queues and the novelty queue get a boost of 1000, as in Fast Downward. The
configurations are: \texttt{ff}, lazy with $h^{\mathrm{FF}}$; \texttt{lama}, lazy with
$h^{\mathrm{FF}}$ and $h^{\mathrm{LM}}$; \texttt{eager-lama}, the same, eager;
\texttt{novelty}, eager with $h^{\mathrm{FF}}$ and the novelty queue; \texttt{lama-novelty},
eager with both heuristics and the novelty queue. As in LAMA, the first search uses unit costs
when the costs differ (cost plus one when some cost is zero). After the first plan, restarts of
weighted A$^*$ with weights 5, 3, 2 and 1, with reopening and without novelty, each prune by the
incumbent's cost; with pruning, the last weight repeats until a search exhausts its open list
(Proposition~\ref{prop:optimal}).

\section{The Lifted-Table Task}
\label{app:lift}

The task of Example~\ref{ex:lift}, as given to \sys\ and to the validator. The static
\texttt{partner} rules out \texttt{prepare(r1, r1)}.

\begin{verbatim}
(define (domain table-preparation)
  (:requirements :strips :typing :durative-actions :negative-preconditions)
  (:types robot)
  (:predicates (partner ?r ?other - robot) (grip-active ?r - robot)
               (ready ?r - robot))
  (:durative-action prepare-lift
    :parameters (?r ?other - robot)
    :duration (= ?duration 10)
    :condition (and (at start (partner ?r ?other))
                    (at start (not (grip-active ?r)))
                    (at end (grip-active ?other)))
    :effect (and (at start (grip-active ?r))
                 (at end (ready ?r)))))

(define (problem table-preparation-two-robots)
  (:domain table-preparation)
  (:objects r1 r2 - robot)
  (:init (partner r1 r2) (partner r2 r1))
  (:goal (and (ready r1) (ready r2))))
\end{verbatim}

\noindent M2 runs out of its four states, so a planner that took this as a proof would answer
\emph{unsolvable}. The validator accepts the plan \texttt{0.000: (prepare-lift r1 r2) [10]},
\texttt{1.000: (prepare-lift r2 r1) [10]} (makespan 11), and \sys\ returns
\texttt{0.000: (prepare-lift r1 r2) [10]}, \texttt{0.000: (prepare-lift r2 r1) [10]}
(makespan 10), found by M3 after M2 runs out of states. The task is in the test suite of
\sys.

\section{Full Tables}
\label{app:tables}
\begin{longtable}{@{}llrrr@{}}
\caption{Tasks solved within 100\,ms, 1\,s and the limit, by every planner, with uncorrected
times; in parentheses, the counts without the start-up of planutils's Apptainer images. ``of
$n$'': a run on a subset of the suite.}\label{tab:fullcoverage}\\
\toprule
Suite & Planner & 100\,ms & 1\,s & limit \\
\midrule
\endfirsthead
\toprule
Suite & Planner & 100\,ms & 1\,s & limit \\
\midrule
\endhead
\rows{coverage_full.tex}
\bottomrule
\end{longtable}

{\footnotesize
\begin{longtable}{@{}lp{5.2cm}rp{4.6cm}@{}}
\caption{Domains where another planner solves more tasks than \sys\ (at the limit).}
\label{tab:losses}\\
\toprule
Suite & Domain & \sys & Better \\
\midrule
\endfirsthead
\toprule
Suite & Domain & \sys & Better \\
\midrule
\endhead
\rows{losses.tex}
\bottomrule
\end{longtable}}

\section{Online Resources}
\label{app:resources}

The planner (\url{https://github.com/PlanSys2/kairos}, Apache~2.0) and the benchmark
(\url{https://github.com/PlanSys2/kairos-benchmarks}, Apache~2.0 for our problems and code).
The directory \texttt{papers/jair2026} of the benchmark repository holds the scripts as they
were run, the raw results of every run of every planner, \sys's plans, the container recipes of
the other planners with their pinned commits, and the comparison with VAL of
Section~\ref{sec:val-agreement}, with the plan, domain and problem of each disagreement. Running
\texttt{bench/paper\_tables.py} there regenerates the tables of this article from those
results.

\section{Reproducibility Checklist for JAIR}

\subsection*{All articles:}
\begin{enumerate}
\item All claims investigated in this work are clearly stated. [yes]
\item Clear explanations are given how the work reported substantiates the claims. [yes]
\item Limitations or technical assumptions are stated clearly and explicitly. [yes]
\item Conceptual outlines and/or pseudo-code descriptions of the AI methods introduced in this
work are provided, and important implementation details are discussed. [yes]
\item Motivation is provided for all design choices, including algorithms, implementation
choices, parameters, data sets and experimental protocols beyond metrics. [partially: the
orchestration shares were set by hand; their sensitivity is left for future work]
\end{enumerate}

\subsection*{Articles containing theoretical contributions:}
Does this paper make theoretical contributions? [yes]
\begin{enumerate}
\item All assumptions and restrictions are stated clearly and formally. [yes]
\item All novel claims are stated formally (e.g., in theorem statements). [yes]
\item Proofs of all non-trivial claims are provided in sufficient detail to permit verification
by readers with a reasonable degree of expertise. [partially: Propositions~\ref{prop:m2sound}
and~\ref{prop:m1} are proved, the latter in Appendix~\ref{app:proofs}; Propositions~\ref{prop:untimed}
and~\ref{prop:optimal} have proof sketches; the hypotheses of Propositions~\ref{prop:m1}
and~\ref{prop:optimal} are properties of the implementation that are tested but not verified
formally; the completeness of compression is refuted by Example~\ref{ex:lift}]
\item Complex formalism, such as definitions or proofs, is motivated and explained clearly. [yes]
\item The use of mathematical notation and formalism serves the purpose of enhancing clarity and
precision. [yes]
\item Appropriate citations are given for all non-trivial theoretical tools and techniques. [yes]
\end{enumerate}

\subsection*{Articles reporting on computational experiments:}
Does this paper include computational experiments? [yes]
\begin{enumerate}
\item All source code required for conducting experiments is included in an online appendix or
will be made publicly available upon publication of the paper. [yes]
\item The source code comes with a license that allows free usage for reproducibility purposes.
[yes]
\item The source code comes with a license that allows free usage for research purposes in
general. [yes]
\item Raw, unaggregated data from all experiments is included in an online appendix or will be
made publicly available upon publication of the paper. [yes]
\item The unaggregated data comes with a license that allows free usage for reproducibility
purposes. [yes]
\item The unaggregated data comes with a license that allows free usage for research purposes in
general. [yes]
\item If an algorithm depends on randomness, then the method used for generating random numbers
and for setting seeds is described in a way sufficient to allow replication of results. [yes:
with one thread \sys\ is deterministic; generators take a seed]
\item The execution environment for experiments is described. [yes]
\item The evaluation metrics used in experiments are clearly explained and their choice is
explicitly motivated. [yes]
\item The number of algorithm runs used to compute each result is reported. [yes]
\item Reported results have not been ``cherry-picked''. [yes]
\item Analysis of results goes beyond single-dimensional summaries of performance. [yes]
\item All (hyper-) parameter settings for the algorithms/methods used in experiments have been
reported, along with the rationale or method for determining them. [partially: all settings are
reported; the orchestration shares were set by hand during development
(Section~\ref{sec:orchestration})]
\item The number and range of (hyper-) parameter settings explored prior to conducting final
experiments have been indicated, along with the effort spent on (hyper-) parameter
optimisation. [partially: the shares were set by inspection, without a search over settings;
Section~\ref{sec:orchestration}]
\item Appropriately chosen statistical hypothesis tests are used to establish statistical
significance in the presence of noise effects. [yes]
\end{enumerate}

\subsection*{Articles using data sets:}
Does this work rely on one or more data sets? [yes]
\begin{enumerate}
\item All newly introduced data sets are included in an online appendix or will be made publicly
available upon publication of the paper. [yes]
\item The newly introduced data set comes with a license that allows free usage for
reproducibility purposes. [yes]
\item The newly introduced data set comes with a license that allows free usage for research
purposes in general. [yes]
\item All data sets drawn from the literature or other public sources are accompanied by
appropriate citations. [yes]
\item All data sets drawn from the existing literature are publicly available. [yes]
\item All new data sets and data sets that are not publicly available are described in detail.
[yes]
\item All methods used for preprocessing, augmenting, batching or splitting data sets are
described in detail. [yes]
\end{enumerate}

\end{document}